\newtheorem{thm}{Theorem}
\newtheorem{lemma}{Lemma}
\newtheorem{remark}{Remark}
\def \R {\mathbb{R}}
\def \x {\mathbf{x}}
\def \E {\mathrm{E}}
\def \L {\mathcal{L}}
\def \S {\mathcal{S}}
\def \F {\mathcal{F}}
\def \N {\mathcal{N}}
\def \y {\mathbf{y}}
\def \w {\mathbf{w}}
\def \wh {\widehat{\w}}
\def \v {\mathbf{v}}
\def \D {P}
\def \lb {\mathcal{L}}
\def \X {\mathcal{X}}
\def \rh {\widehat{\rho}}
\def \tt {\widetilde{t}}
\def \tr {\mbox{tr}}
\def \W {\mathscr{W}}
\def \le {\mathcal{L}}
\def \Y {\mathcal{Y}}
\def \N {\mathcal{N}}
\def \M {\mathbf{M}}
\def \Z {\mathbf{Z}}
\def \H {\mathbf{H}}
\def \I {\mathbf{I}}
\def \V {\mathbf{V}}
\def \tr {{\text{trace}}}
\title{Excess Risk Bounds for Exponentially Concave Losses}
\author{Mehrdad Mahdavi \\
\small{Michigan State University} \\ 
\small{\texttt{mahdavim@cse.msu.edu}}
\and Rong Jin\\  
\small{Michigan State University} \\
\small{\texttt{rongjin@cse.msu.edu}} 
}
\date{}
\begin{document} 
\maketitle

\begin{abstract}
The overarching goal of this paper is to derive excess risk bounds for learning from exp-concave loss functions in  passive and  sequential learning settings.  Exp-concave loss functions  encompass several fundamental problems in machine learning  such as squared loss in linear regression, logistic loss in classification, and negative logarithm loss in portfolio management.  In batch setting, we obtain sharp bounds on the performance of empirical risk minimization performed in a linear hypothesis  space and with respect to the exp-concave loss functions.  We also extend the results to the online setting  where the learner receives the training examples in a sequential manner. We propose an online learning algorithm that is a properly modified version of online Newton method to obtain sharp risk bounds.   Under an additional mild assumption on the loss function,  we show that in both settings we are able to achieve an excess risk bound of $O(d\log n/n)$ that holds with a \textit{high probability}.

\end{abstract}

%
\section{Introduction}
We investigate the excess risk bounds for learning a linear classifier using a exponentially concave (abbr. as exp-concave) loss function (see e.g.,~\cite{hazan-2007-logarithm} and~\cite{Cesa-Bianchi:2006:PLG}). More specifically, let $\mathcal{S} = \{(\x_1,y_1), (\x_2,y_2), \cdots, (\x_n,y_n)\} \in \Xi^n$  be a set of i.i.d.  training examples sampled from an unknown distribution $\D$ over instance space $\Xi = \X \times \Y$, where $\x_i \in \X \subseteq \R^d$ with $\|\x_i\| \leq 1$ and $\y_i \in \Y :=  \{-1, +1\}$ and $\y_i \in \Y :=  [-1, +1]$ in classification and regression problems, respectively. Let $\W = \{\w \in \R^d: \|\w\| \leq R\}$ be our domain of linear classifiers with bounded norm, where $R > 0$ determines the size of the domain. We  aim at finding a learner $\w \in \W$ with the assist of training samples $\S$ that generalizes well on unseen instances.

Let $\ell(z): \R \mapsto \R_{+}$ be the convex surrogate loss function used to measure the classification error.  In this work, we are interested  in learning problems where the loss function $\ell(z)$ is a one-dimensional exponentially concave  function with constant $\alpha > 0$ (i.e., $\exp(-\alpha z)$ is concave for any $|z| \leq R$). Examples of such loss functions are the squared loss  used in regression, logistic loss used in classification, and negative logarithm loss  used in portfolio management~\cite{korenopen,hazan-2007-logarithm,brenden-open,agarwal-2006-newton}. Similar to most analysis of generalization performance, we assume $\ell(z)$ to be Lipschitz continuous with constant $G$, i.e. $|\ell'(z)| \leq G$.   Define $\le(\w)$ as the expected loss function for an arbitrary classifier $\w \in \W$, i.e.
\[
\le(\w) = \E_{(\x, y) \sim \D}\left[ \ell(y\w^{\top}\x) \right].
\]
Let $\w_* \in \W$ be the optimal solution that minimizes $\le(\w)$ over the domain $\W$, i.e. $\w_* = \mathop{\arg\min}_{\w \in \W} \le(\w)$. We note that the exp-concavity of  individual loss functions $\ell(\cdot)$ also implies the exp-concavity of the expected function $\le(\w)$ (a straightforward proof can be found in~\cite[Lemma~1]{korenopen}). Our goal is to efficiently learn a classifier $\wh$ with the help of  training set $\mathcal{S}$  with small excess risk defined by:
$$ \mathscr{E}_P(\wh) := \le(\wh) - \min_{\w \in\W} \le(\w) = \le(\wh)  - \le(\w_*).$$

While the main focus of statistical learning theory was on understanding learnability and sample complexity by investigating the complexity of hypothesis class in terms of known combinatorial measures, recent advances in online learning and optimization theory opened a new trend in understanding  the generalization ability of learning  algorithms in terms of the characteristics of loss functions being used in convex learning problems. In particular, a staggering number of results have focused on strong convexity of loss function (that is a stronger condition than exp-concavity) and obtained better generalization bounds which are referred to  as \textit{fast} rates~\cite{kakade2008generalization,sridharan2008fast}.  In terms of smoothness of loss function, a recent result~\cite{srebro2010smoothness} has shown that under smoothness assumption, it is possible to obtain \textit{optimistic} rates (in the sense that smooth losses yield better generalization bounds when the problem is easier and the expected loss of optimal classifier is small), which are more appealing than Lipschitz continuous cases.  This work extends the results to  exp-concave loss functions and investigates how to  obtain sharper excess risk bounds for learning  from such functions. We note that although the online Newton method~\cite{hazan-2007-logarithm} yields $O(d\log n)$ regret bound, it is only able to achieve an $O(d\log n/n)$ bound for excess risk in expectation. In contrast, the excess risk bounds analyzed in this work are all in high probability sense.

We consider two settings to learn a classifier from the provided training set $\S$.  In \textit{statistical setting} (also called batch learning)~\cite{bousquet2004introduction}, we assume that the learner has access to all training examples in advance, and in \textit{online setting} the examples are assumed to become available to the learner one at a time. We show that with an additional assumption regarding the exponential concave loss function, we will be able to achieve an excess risk bound of $O(d\log n/n)$, which is significantly faster than $O(1/\sqrt{n})$ rate  for general convex Lipschitz loss functions.  The proof of batch setting utilizes the notion of local Radamacher complexities and involves novel ingredients tailored to exp-concave functions in order to obtain sharp convergence rates.  In online setting, the results  follows from Bernstein inequality for martingales and peeling process.  We note that fast rates are possible and well known  in sequential prediction via the notion of mixable losses~\cite{vovk1995game}, and in batch setting under Tsybakov's margin condition with $\kappa = 1$~\cite{tsybakov2004optimal}, where  the relation between these two settings has been recently investigated  via the notion of stochastic mixability~\cite{van2012mixability}.  However,  our analysis and conditions are different and only focuses on the exp-concavity property of  the loss  to derive an $O(\log n/n)$ risk bound.


%
\section{The Algorithms}

We study two algorithms for learning with exp-concave loss functions. The first algorithm that is  devised for  batch setting  is simply based on empirical risk minimization. More specifically, it learns a classifier from the space of linear classifiers $\W$ by solving the following   optimization problem
\begin{eqnarray}
\min\limits_{\w \in \W} \frac{1}{n}\sum_{i=1}^n \ell(y_i\w^{\top}\x_i). \label{eqn:opt}
\end{eqnarray}
The optimal solution to (\ref{eqn:opt}) is denoted by $\wh_*$. Here, we are not concerned with the optimization procedure to find $\wh_*$ and only investigate the access risk of obtained classifier $\wh_*$ with respect to the optimal classifier $\w_*$.

\begin{algorithm}[t]
\caption{\texttt{Stochastic  Exp-concave Optimization}}
\begin{algorithmic}[1]

\STATE {\bf Input:} step size $\eta_1 >0$ and smoothing parameter $a > 0$

\STATE {\bf Initialization:} $\w_1 = \mathbf{0}$ and $\M_0 = a \I$

\FOR{$i = 1, \ldots, n$}
    \STATE Receive training example $(\x_i, y_i)$
    \STATE Compute $\M_i = \M_{i-1} + \x_i\x_i^{\top}$ and the covariance matrix $\Z_i = \M_i/i$
    \STATE Compute the gradient $\v_i = \ell'(y_i\w_i^{\top}\x_i)\x_i$ and step size $\eta_i = \eta_1/i$
    \STATE Update the solution $\w_i$ by solving the following optimization problem
    \begin{eqnarray}
        \w_{i+1} = \mathop{\arg\min}\limits_{\w \in \W} \eta_i\langle \w, \v_i \rangle + \frac{1}{2}\|\w - \w_i\|_{\Z_i}^2 \label{eqn:update-1}
    \end{eqnarray}
    where $\|\w\|_{\Z_i} = \w^{\top} \Z_i \w$.
\ENDFOR
\RETURN $\wh_* = \frac{1}{n}\sum_{i=1}^n \w_i$.
\end{algorithmic} \label{alg:2}
\end{algorithm}

Our second algorithm is a modified online Newton method~\cite{hazan-2007-logarithm}. Algorithm~\ref{alg:2} gives the detailed steps. The key difference between Algorithm~\ref{alg:2} and the online Newton algorithm~\cite{hazan-2007-logarithm} is that at each iteration, it estimates a smoothed version of the covariance matrix $\Z$ using the training examples received in the past. In contrast, the online Newton method takes into account the gradient $\ell'(y_i\w_i^{\top}\x_i)$ when updating $\Z_i$. It is this difference that allows us to derive an $O(d\log n /n)$ excess risk bound for the learned classifier.  The classifier learned from the online algorithm $\wh_*$  is simply the average of solutions obtained over all iterations. We also note that the idea of using an estimated covariance matrix for online learning and optimization has been examined by several studies~\cite{crammer2010learning,crammer2009adaptive,duchi2011adaptive,orabona2010new}. It is also closely related to the technique of time varying potential discussed in~\cite{cesa2002second,Cesa-Bianchi:2006:PLG} for regression. Unlike these studies  that are mostly focused on obtaining regret bound, we aim to  study the excess risk bound for the learned classifier.

%
\section{Main Results}

We first state the result of batch learning problem  in (\ref{eqn:opt}), and then the result of online learning algorithm that is detailed in Algorithm~\ref{alg:2}. In order to achieve an excess risk bound better than $O(1/\sqrt{n})$, we introduce following key assumption for the analysis of the empirical error minimization  problem in (\ref{eqn:opt})
\begin{eqnarray*}
& \mbox{\bf Assumption I} & \mbox{there exists a constant $\theta > 0$ s. t. } \E\left[[\ell'(y\w_*^{\top}\x)]^2\x\x^{\top} \right] \succeq \theta \E\left[\x\x^{\top}\right].
\end{eqnarray*}
For the online learning method in~Algorithm~\ref{alg:2}, we strengthen Assumption (I) as:
\begin{eqnarray*}
& \mbox{\bf Assumption II} & \mbox{there exists a constant $\theta > 0$ s. t. } \E\left[[\ell'(y\w^{\top}\x)]^2\x\x^{\top} \right] \succeq \theta \E[\x\x^{\top}], \forall \w \in \W.
\end{eqnarray*}
Note that unlike Assumption (I) that only requires the property to hold with respect to  the optimal solution $\w_*$, Assumption (II) requires the property to hold for any $\w \in \W$, making a stronger assumption than Assumption (I). We also note that Assumption (II) is closely related to strong convexity assumption. In particular, it is easy to verify that when $\E[\x\x^{\top}]$ is strictly positive definite, the expected loss $\L(\w)$ will be strongly convex in $\w$ by using the property of exponential concave function. 

The following lemma shows a general scenario when both Assumptions (I) and (II) hold.
\begin{lemma} \label{lem:0}
Suppose (i) $\Pr(y=1|\x) \geq q$ and $\Pr(y=-1|\x) \geq q$ for any $\x \in \mathcal{X}$, where $q > 0$, and (ii) $\ell'(0) >0$. Then Assumption (I) and (II) hold with $\theta \geq q[\ell'(0)]^2$
\end{lemma}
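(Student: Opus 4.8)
\emph{Proof plan.} First, observe that Assumption (I) is nothing but Assumption (II) evaluated at the single point $\w=\w_*$, so it suffices to prove Assumption (II). The plan is to reduce the claimed positive semidefinite inequality to a pointwise scalar estimate. Fix an arbitrary $\w\in\W$ and an arbitrary direction $\u\in\R^d$. Testing both sides of $\E[[\ell'(y\w^\top\x)]^2\x\x^\top]\succeq\theta\,\E[\x\x^\top]$ against $\u$ and conditioning on $\x$ via the tower rule, the inequality becomes
\[
\E_{\x}\!\left[g_\w(\x)\,(\u^\top\x)^2\right]\;\geq\;\theta\,\E_{\x}\!\left[(\u^\top\x)^2\right],
\qquad\text{where}\quad g_\w(\x):=\E\left[[\ell'(y\w^\top\x)]^2\mid\x\right].
\]
Since $(\u^\top\x)^2\geq0$, it is enough to prove the pointwise bound $g_\w(\x)\geq\theta=q[\ell'(0)]^2$ for every $\x\in\X$; integrating it against the nonnegative weight $(\u^\top\x)^2$ recovers the display, and as $\u$ and $\w$ were arbitrary, Assumption (II) --- and hence Assumption (I) --- follows.

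It remains to lower bound $g_\w(\x)$. Using assumption (i) and discarding the nonnegative contribution of any label mass other than $\pm1$ (relevant only in the regression case $y\in[-1,1]$), we obtain
\begin{align*}
g_\w(\x)
&\;\geq\;\Pr(y=1\mid\x)\,[\ell'(\w^\top\x)]^2+\Pr(y=-1\mid\x)\,[\ell'(-\w^\top\x)]^2\\
&\;\geq\;q\bigl([\ell'(\w^\top\x)]^2+[\ell'(-\w^\top\x)]^2\bigr).
\end{align*}
At this point I would invoke convexity of $\ell$: its derivative $\ell'$ is nondecreasing, and at least one of the numbers $\w^\top\x$ and $-\w^\top\x$ is $\geq0$; calling that one $s\geq0$, we get $\ell'(s)\geq\ell'(0)>0$ by assumption (ii), hence $[\ell'(s)]^2\geq[\ell'(0)]^2$. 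Discarding the remaining nonnegative square then gives $g_\w(\x)\geq q[\ell'(0)]^2=\theta$, as required. Note that $|\w^\top\x|\leq\|\w\|\,\|\x\|\leq R$, so all evaluations of $\ell'$ above take place on $[-R,R]$, the interval on which $\ell$ is assumed exp-concave and $G$-Lipschitz; the only facts used there are monotonicity of $\ell'$ (convexity) and $\ell'(0)\neq0$.

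The only genuine content is the sign case-analysis in the last step: it is what turns the one-sided hypothesis $\ell'(0)>0$ on the convex loss into two-sided control of $g_\w$ after averaging over the symmetric label noise. The remaining steps are just the tower rule and the nonnegativity of quadratic forms, so I do not expect a real obstacle here. Two minor points I would take care with: (a) specifying exactly how ``$\Pr(y=\pm1\mid\x)$'' and the conditional expectation $g_\w(\x)$ should be read when $y$ is continuous on $[-1,1]$ --- only the two atoms at $\pm1$ enter the bound, and every discarded term is nonnegative; and (b) remarking that the identical argument goes through when $\ell'(0)<0$, using the nonpositive one of $\pm\w^\top\x$, so the hypothesis can be relaxed to $\ell'(0)\neq0$ with the same conclusion $\theta\geq q[\ell'(0)]^2$.
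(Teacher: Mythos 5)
Your proof is correct and follows essentially the same route as the paper: condition on $\x$, keep only the two label atoms at $\pm 1$ (each with mass at least $q$), and use monotonicity of $\ell'$ together with $\ell'(0)>0$ to lower bound $\max\bigl([\ell'(\w^{\top}\x)]^2,[\ell'(-\w^{\top}\x)]^2\bigr)$ by $[\ell'(0)]^2$. You merely make explicit two steps the paper leaves implicit --- the reduction of the matrix inequality to a pointwise scalar bound via quadratic forms, and the sign case-analysis behind $|\ell'(0)|\leq\max(|\ell'(\w^{\top}\x)|,|\ell'(-\w^{\top}\x)|)$ --- so no substantive difference.
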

\begin{proof}
We first bound $\E\left[[\ell'(y\w^{\top}\x)]^2|\x\right]$ for any given $\x \in \X$ by
\[
\E\left[[\ell'(y\w^{\top}\x)]^2|\x\right] \geq q\left([\ell'(\w^{\top}\x)]^2 + [\ell'(-\w^{\top}\x)]^2\right)
\]
Since $\ell'(z)$ is monotonically increasing function, we have
\[
|\ell'(0)| \leq \max(|\ell'(\w^{\top}\x)|, |\ell'(-\w^{\top}\x)|)
\]
and therefore
\[
\E\left[[\ell'(y\w^{\top}\x)]^2|\x\right] \geq q[\ell'(0)]^2
\]
implying that $\E\left[[\ell'(y\w^{\top}\x)]^2 \x\x^{\top} \right] \geq q[\ell'(0)]^2\E[\x\x^{\top}]$ as desired.
\end{proof}
We note that $\ell'(0) > 0$ is the necessary and sufficient condition that the convex surrogate loss function for 0-1 loss function to be classification-calibrated~\cite{bartlett-2003-convexity}, and therefore is almost unavoidable if our final goal is to minimize the binary classification error.

The excess risk bound for the batch learning algorithm is given in the following theorem.
\begin{thm} \label{thm:excess-risk}
Suppose Assumption (I) holds. Let $\wh_*$ be the solution to the convex optimization problem in~(\ref{eqn:opt}). Define
\[
\gamma = \max\left(1, \frac{G^2}{\theta}, \frac{G}{\alpha \theta R}\right), \quad \rho_0 = \max\left(32\gamma, \sqrt{\gamma\left(28 + \frac{3}{GR}\right)}\right).
\]
Then with a probability $1 - 2me^{-t}$, where $m = \lceil \log_2 n\rceil$, we have
\[
\lb(\wh_*) - \lb(\w_*) \leq \left(GR\left[32 \rho_0 + 28\right] + 3\right)\frac{t + 2 + d\log n}{n} = \tilde{O}\left( \frac{d\log n}{n} \right).
\]
\end{thm}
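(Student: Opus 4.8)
The plan is to upgrade the exp-concavity of $\ell$ into a Bernstein-type relation between the variance and the mean of the excess loss, and then feed that relation into a localized uniform-deviation bound together with a peeling (stratification) over scales of the excess risk.

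\textbf{Step 1 (from exp-concavity to a Bernstein condition).} I would start from the standard consequence of exp-concavity used in the analysis of the online Newton method: with $\beta=\tfrac12\min\{\tfrac1{4GR},\alpha\}$, every individual loss $\w\mapsto\ell(y\w^{\top}\x)$ obeys, for all $\w\in\W$,
\[
\ell(y\w^{\top}\x)\;\geq\;\ell(y\w_*^{\top}\x)+\ell'(y\w_*^{\top}\x)\,y(\w-\w_*)^{\top}\x+\frac{\beta}{2}\,[\ell'(y\w_*^{\top}\x)]^{2}\big((\w-\w_*)^{\top}\x\big)^{2}.
\]
Taking expectations, dropping the linear term by first-order optimality of $\w_*$ on the convex set $\W$ (it equals $\langle\nabla\lb(\w_*),\w-\w_*\rangle\ge 0$), and then invoking Assumption (I) gives the curvature bound $\lb(\w)-\lb(\w_*)\geq\frac{\beta\theta}{2}\|\w-\w_*\|_{\Sigma}^{2}$ with $\Sigma=\E[\x\x^{\top}]$. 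On the other hand, $G$-Lipschitzness gives $|\ell(y\w^{\top}\x)-\ell(y\w_*^{\top}\x)|\leq G|(\w-\w_*)^{\top}\x|$, so the excess loss $f_\w(\x,y):=\ell(y\w^{\top}\x)-\ell(y\w_*^{\top}\x)$ satisfies $\E[f_\w^{2}]\leq G^{2}\|\w-\w_*\|_{\Sigma}^{2}\leq\frac{2G^{2}}{\beta\theta}\big(\lb(\w)-\lb(\w_*)\big)$ and $|f_\w|\leq 2GR$. Thus $\F=\{f_\w:\w\in\W\}$ is a uniformly bounded class obeying the self-bounding (Bernstein) condition $\E[f^{2}]\leq B\,\E[f]$ with $B=\frac{2G^{2}}{\beta\theta}$, and one checks $B=O(GR\gamma)$ with the $\gamma$ of the theorem.

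\textbf{Step 2 (localized uniform deviation via peeling).} Since $\wh_*$ minimizes the empirical risk $\widehat{\lb}$, we have $\widehat{\lb}(\wh_*)-\widehat{\lb}(\w_*)\leq0$, hence $\lb(\wh_*)-\lb(\w_*)\leq(\D-P_n)f_{\wh_*}$ where $P_n$ is the empirical measure, and it suffices to control this empirical process locally. I would partition $\W$ into shells $\W_k=\{\w:2^{k-1}r_0\leq\lb(\w)-\lb(\w_*)\leq 2^{k}r_0\}$, $k=1,\dots,m$, with $m=\lceil\log_2 n\rceil$ and a base radius $r_0$ of order $(t+d\log n)/n$; on $\W_k$, Step 1 bounds the (data-independent) variance proxy of every $f_\w$ by $2^{k}Br_0$. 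On each shell the $f_\w$ are controlled via a covering/chaining argument in $\R^{d}$ — the $d\log n$ factor being exactly the metric entropy of a $d$-dimensional ball at a polynomial-in-$n$ resolution, using $|f_\w-f_{\w'}|\leq G\|\w-\w'\|$ since $\|\x\|\leq1$ — followed by Bernstein's inequality at each center and a union bound over the net and the $2m$ shells. This yields that, with probability at least $1-2me^{-t}$, simultaneously for all $\w\in\W$,
\[
\lb(\w)-\lb(\w_*)\;\leq\;c_{1}\sqrt{\,B\big(\lb(\w)-\lb(\w_*)\big)\,\frac{t+d\log n}{n}}\;+\;c_{2}\,GR\,\frac{t+2+d\log n}{n}.
\]

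\textbf{Step 3 (solving for the excess risk) and the main obstacle.} Applying the last display to $\w=\wh_*$ and solving the resulting quadratic inequality in $\sqrt{\lb(\wh_*)-\lb(\w_*)}$ gives $\lb(\wh_*)-\lb(\w_*)\leq c\,(B+GR)\,\frac{t+2+d\log n}{n}$; substituting $B=O(GR\gamma)$ and tracking constants — $\rho_0$ is precisely the quantity calibrated to dominate the Bernstein and covering constants, which is where the explicit $32\rho_0+28$ and the additive $3$ originate — produces the stated bound $\big(GR[32\rho_0+28]+3\big)\frac{t+2+d\log n}{n}=\tilde O(d\log n/n)$, the base case $\lb(\wh_*)-\lb(\w_*)<r_0$ being already of this order. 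The structural part (Step 1) is short once the exp-concavity inequality is in hand; the real work is Step 2 — obtaining a genuinely \emph{localized} high-probability deviation with the correct $d\log n$ entropy dependence and no spurious $\sqrt n$ or $d^{2}$ factors, while keeping the peeling and union-bound bookkeeping tight enough that the constants stay as small as $\rho_0$. A mild extra subtlety is that $\Sigma$ is not assumed invertible, so the curvature bound only controls $\w-\w_*$ in a possibly degenerate seminorm; this is harmless because the empirical process is handled through the population variance $\E[f_\w^{2}]$ rather than through the empirical covariance, so no matrix concentration is needed.
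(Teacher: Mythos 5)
Your proposal is correct in its overall structure and reaches the stated rate, but it takes a genuinely different route from the paper. The paper never forms the Bernstein-class condition $\E[f_\w^2]\le B\,\E[f_\w]$: it localizes in the seminorm $\rho(\w)=\frac{1}{2R}\sqrt{\E[((\w-\w_*)^{\top}\x)^2]}$, peels over the value of $\rh=\rho(\wh_*)$, and on each shell controls the supremum of the empirical process by the Klein--Rio form of Talagrand's inequality, with the expectation term handled by Rademacher contraction plus a separate concentration lemma (Lemma~\ref{lem:1}) for the empirical second moment $\frac1n\sum_i[\x_i^{\top}(\w-\w_*)]^2$ over the localized set; exp-concavity and Assumption (I) enter only at the very end, converting the bound of Theorem~\ref{thm:rho} into a quadratic inequality for $\rh$ via $\lb(\wh_*)-\lb(\w_*)\ge 2\beta\theta R^2\rh^2$. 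You instead invoke exp-concavity and Assumption (I) up front to get the variance--mean relation, peel directly over the excess risk, and replace the Talagrand/contraction machinery with a covering net and scalar Bernstein at the centers --- the textbook Bernstein-class argument, which works here precisely because the class is parametric, so a net at resolution $1/\mathrm{poly}(n)$ costs only $d\log n$. Each route has its advantages: the paper's empirical-process bound (Theorem~\ref{thm:rho}) is curvature-free and its explicit constants ($32\rho_0+28$ and the additive $3$) are artifacts of the Klein--Rio constants and of Lemma~\ref{lem:1}; your argument is more elementary but will produce the same form $c(B+GR)\frac{t+2+d\log n}{n}$ with $B=O(GR\gamma)$ and \emph{different} absolute constants, so you should not claim to recover the theorem's constants exactly. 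Two points of hygiene: the display closing your Step 2 should be stated as a bound on the uniform deviation $(P-P_n)f_\w$ rather than on $\lb(\w)-\lb(\w_*)$ itself (as written it cannot hold for every $\w\in\W$; it becomes a bound on the excess risk only after intersecting with $P_nf_{\wh_*}\le 0$ for the empirical minimizer), and when applying Bernstein at a net center you must transfer the shell's variance bound from $\w$ to its approximant, which costs an extra $G^2\epsilon^2$ that is absorbed at resolution $\epsilon=O(R/n)$.
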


The following theorem provides the excess risk bound for Algorithm~\ref{alg:2} where training examples ae received in an online fashion and the final solution is reported as the average of all the intermediate solution.

\begin{thm} \label{thm:1}
Suppose Assumption (II) holds. Let $\wh_*$ be the average solution returned by Algorithm~\ref{alg:2}, with $\eta_1 = \max(1, 3/[\theta\beta])$ and $a= \eta_1^2G^2d/[4R^2]$. With a probability $1 - 2me^{-t}$, where $m = \lceil \log_2 n \rceil$, we have
\[
\lb(\wh_*) - \lb(\w_*) \leq \rho_0GR\frac{d}{n}\log\left( 1 + \frac{4 n}{\gamma^2 d^2}\right) + 3GR(2\gamma + 1)t = \tilde{O}\left( \frac{d\log n}{n} \right).
\]
\end{thm}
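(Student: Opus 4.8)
The plan is to reduce everything to the average instantaneous regret: since $\lb$ is convex and $\wh_* = \frac1n\sum_{i=1}^n\w_i$, Jensen's inequality gives $\lb(\wh_*)-\lb(\w_*)\leq\frac1n\sum_{i=1}^n D_i$ with $D_i:=\lb(\w_i)-\lb(\w_*)\geq 0$, so it suffices to show $\sum_{i=1}^n D_i=\tilde O(d\log n)$ with probability $1-2me^{-t}$. I would obtain this by combining three ingredients: (a) a deterministic online-Newton inequality for the iterates $\w_i$; (b) the quadratic growth of $\lb$ around $\w_*$ forced by exp-concavity together with Assumption (II); and (c) Bernstein's inequality for martingales applied through a peeling argument, so that variance terms that are \emph{proportional to $\sum_i D_i$} can be absorbed into the left-hand side.

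First I would record the algebraic form of the update. Because $\Z_i=\M_i/i$ and $\eta_i=\eta_1/i$, the factors $1/i$ cancel in \eqref{eqn:update-1}, so the step is exactly $\w_{i+1}=\Pi_{\W}^{\M_i}\!\big(\w_i-\eta_1\M_i^{-1}\v_i\big)$, a Euclidean-projected Newton-type update with the \emph{unweighted} regularized covariance $\M_i=a\I+\sum_{j\leq i}\x_j\x_j^{\top}$. Writing $g_i:=\langle\v_i,\w_i-\w_*\rangle$, the usual one-step bound reads $\eta_1 g_i\leq\tfrac12\|\w_i-\w_*\|_{\M_i}^2-\tfrac12\|\w_{i+1}-\w_*\|_{\M_i}^2+\tfrac{\eta_1^2}{2}\v_i^{\top}\M_i^{-1}\v_i$. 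Summing, telescoping (using $\M_i-\M_{i-1}=\x_i\x_i^{\top}$, $\w_1=\mathbf 0$, $\|\x_i\|\leq1$, $\|\w_*\|\leq R$), and invoking the log-determinant lemma $\sum_i\v_i^{\top}\M_i^{-1}\v_i\leq G^2\sum_i\x_i^{\top}\M_i^{-1}\x_i\leq G^2 d\log(1+n/(ad))$ yields
\[
\sum_{i=1}^n g_i\;\leq\;\frac{(a+1)R^2}{2\eta_1}+\frac{1}{2\eta_1}\sum_{i=2}^n\langle\x_i,\w_i-\w_*\rangle^2+\frac{\eta_1 G^2 d}{2}\log\!\Big(1+\frac{n}{ad}\Big).
\]
The crux is the middle sum: unlike in standard online Newton analysis (where the preconditioner is built from $\v_i\v_i^{\top}$ and this term is killed pointwise by per-example exp-concavity), here it must be controlled \emph{in expectation}, which is precisely the role of Assumption (II).

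Next I would bring in probability. As the data are i.i.d., $(\x_i,y_i)$ is independent of $\w_i$, so $\E_i[\v_i]=\nabla\lb(\w_i)$ and $\E_i[g_i]=\bar g_i:=\langle\nabla\lb(\w_i),\w_i-\w_*\rangle$; convexity of $\lb$ gives $D_i\leq\bar g_i$, hence $\sum_i D_i\leq\sum_i g_i-M_n$ with $M_n:=\sum_i(g_i-\bar g_i)$ a martingale. Exp-concavity of $\ell$ means $\ell''\geq\alpha(\ell')^2$, so Assumption (II) yields $\nabla^2\lb(\w)=\E[\ell''(y\w^{\top}\x)\x\x^{\top}]\succeq\alpha\,\E[(\ell'(y\w^{\top}\x))^2\x\x^{\top}]\succeq\alpha\theta\,\E[\x\x^{\top}]$; since $\w_*$ is optimal this gives the quadratic lower bound $D_i\geq\frac{\alpha\theta}{2}s_i$ where $s_i:=\E_i[\langle\x_i,\w_i-\w_*\rangle^2]=(\w_i-\w_*)^{\top}\E[\x\x^{\top}](\w_i-\w_*)$ (if $\ell$ is not twice differentiable, the same estimate follows from the lower-order exp-concavity inequality). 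Therefore $\sum_i\langle\x_i,\w_i-\w_*\rangle^2=\sum_i s_i+T_n\leq\frac{2}{\alpha\theta}\sum_i D_i+T_n$, where $T_n:=\sum_i(\langle\x_i,\w_i-\w_*\rangle^2-s_i)$ is again a martingale. Crucially, both martingales are self-bounded: $|g_i-\bar g_i|\leq4GR$ and $\sum_i\E_i[(g_i-\bar g_i)^2]\leq\sum_i\E_i[g_i^2]\leq G^2\sum_i s_i\leq\frac{2G^2}{\alpha\theta}\sum_i D_i$, while $|\langle\x_i,\w_i-\w_*\rangle^2-s_i|\leq4R^2$ and $\sum_i\E_i[(\langle\x_i,\w_i-\w_*\rangle^2-s_i)^2]\leq4R^2\sum_i s_i\leq\frac{8R^2}{\alpha\theta}\sum_i D_i$.

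Finally I would close the argument. Applying Bernstein's inequality for martingales to $-M_n$ and to $T_n$, but — because the variance proxies are multiples of the random quantity $X:=\sum_i D_i\in[0,2GRn]$ — doing so on the $m=\lceil\log_2 n\rceil$ dyadic blocks $\{2^{j-1}\leq X<2^j\}$ via a Freedman-type (stopping-time) form of the inequality and taking a union bound (this is the peeling step and produces the $2me^{-t}$ failure probability), one gets with probability $\geq 1-2me^{-t}$ that $-M_n\leq c_1\sqrt{\tfrac{G^2 t}{\alpha\theta}X}+c_1'GRt$ and $T_n\leq c_2\sqrt{\tfrac{R^2 t}{\alpha\theta}X}+c_2'R^2 t$. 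Substituting these and the display above into $X\leq\sum_i g_i-M_n$, using $uv\leq\tfrac14u^2+v^2$ on the square-root terms, and choosing $\eta_1=\max(1,3/[\theta\beta])$ and $a=\eta_1^2G^2d/[4R^2]$ large enough that the aggregate coefficient of $X$ on the right (contributions $\tfrac{1}{\eta_1\alpha\theta}$ from $\tfrac{1}{2\eta_1}\sum_i s_i$ and $O(\tfrac{G^2}{\alpha\theta})$, $O(\tfrac{R^2}{\eta_1\alpha\theta})$ from the Bernstein terms) is at most $\tfrac12$, one absorbs $\tfrac12 X$ into the left side and is left with $X\leq O(d)+\eta_1 G^2 d\log(1+\tfrac{4R^2 n}{\eta_1^2 G^2 d^2})+O(GR\gamma t)$. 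Dividing by $n$, bounding $\lb(\wh_*)-\lb(\w_*)$ by $\tfrac1n X$, and identifying $\gamma\asymp\eta_1 G/R$ with $\rho_0$ absorbing the residual absolute constants gives the stated bound $\tilde O(d\log n/n)$. The main obstacle is exactly this last step: the ``variance'' in Bernstein's inequality equals (a constant times) the quantity being bounded, so the peeling must be arranged to lose only the $O(\log n)$ factor of dyadic blocks, and the constant bookkeeping has to close — i.e.\ the prescribed $\eta_1$ and $a$ must genuinely be large enough that all the $\sum_i D_i$ contributions on the right sum to strictly less than one.
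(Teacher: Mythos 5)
Your plan follows essentially the same route as the paper: the same one-step online-Newton inequality with the telescoping $\M_i-\M_{i-1}=\x_i\x_i^{\top}$ term moved to the left via exp-concavity plus Assumption (II), the same log-determinant bound, and the same decomposition into two self-bounded martingales (your $-M_n$ and $T_n$ are exactly the paper's $\Delta_2$ and $\Delta_3$) controlled by Bernstein's inequality with a peeling argument whose variance proxy is absorbed into the quadratic term on the left, with $\eta_1\geq 3/(\theta\beta)$ ensuring the bookkeeping closes. The only cosmetic differences are that you peel on $\sum_i D_i$ rather than on $A=\sum_i\|\w_i-\w_*\|_{\H}^2$ and phrase the curvature constant as $\alpha\theta$ rather than $\theta\beta$; the argument is correct and matches the paper's proof.
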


\begin{remark}
As indicated in Theorems~\ref{thm:excess-risk} and \ref{thm:1}, the excess risk for both batch learning and online learning is reduced at the rate of $O(d\log n/n)$, which is consistent with the regret bound for online optimizing the exponentially concave loss functions~\cite{hazan-2007-logarithm}. We note that the linear dependence on $d$ is in general unavoidable. This is because when $\E[\x\x^{\top}]$ is strictly positive definite, the function $\L(\w)$ will be strongly convex with modulus proportion to $\lambda_{\min}\left(\E[\x\x^{\top}]\right)$. Since $\lambda_{\min}\left(\E[\x\x^{\top}]\right) \propto 1/d$, we would expect a linear dependence on $d$ based on the minimax convergence rate of stochastic optimization for strongly convex function. Finally, we note that for strongly convex loss functions, it is known that an $O(1/n)$ excess risk bound can be achieved without the $\log n$ factor. It is however unclear if the $\log n$ factor can be removed from the excess risk bounds for exponential concave functions, a question to be investigated in the future. 

Comparing the result of online learning with that of batch learning, we observe that, although both achieve similar excess risk bounds, batch learning algorithm is advantageous in two aspects. First, the batch learning algorithm has to make a weaker assumption about the data (i.e. Assumption (I) vs. Assumption (II)). Second, the batch learning algorithm does not have to know the parameter $\alpha$ and $\theta$ in advance, which is important for online learning method to determine the step size $\eta_1$. 
\end{remark}
%
\section{Analysis}
We now turn to the proofs of our main results. The main steps in each proof are provided in the
main text, with some of the more technical results deferred to the appendix.

\subsection{Proof of Theorem~\ref{thm:excess-risk}}
Our analysis for batch setting is based on the  Talagrand's inequality and in particular its variant (Klein-Rio bound) with improved constants  derived in~\cite{klein2005concentration} (see also~\cite[Chapter~2]{vladimir-2011-oracle}). To do so, we define
\[
\|P_n - P\|_{\W} = \sup\limits_{\w \in \W} \left|\frac{1}{n}\sum_{i=1}^n \left [ \ell(y_i\w^{\top}\x_i) - \ell(y_i\w_*^{\top}\x_i)\right] - \E_{(\x, y)}\left[\ell(y\w^{\top}\x) - \ell(y\w_*^{\top}\x)\right] \right|
\]
and
\[
U(\W) = \max\limits_{\w \in \W, \|\x\| \leq 1} \ell(y\w^{\top}\x) - \ell(y\w_*^{\top}\x), \; \sigma_P(\W) = \sup\limits_{\w \in \W} \E_{(\x, y)}\left[(\ell(y\w^{\top}\x) - \ell(y\w_*^{\top}\x))^2 \right].
\]
The analysis is rooted in the following concentration inequality:
\begin{thm} \label{thm:Talagrand}
We have
\[
\Pr\left\{\|P_n - P\|_{\W} \geq 2\E\|P_n - P\|_{\W} + \sigma_P(\W)\sqrt{\frac{2t}{n}} + \frac{(U(\W) +3)t}{3n} \right\} \leq e^{-t}.
\]
\end{thm}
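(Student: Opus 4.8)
The plan is to read $\|P_n-P\|_{\W}$ as the (normalized) supremum of a centered empirical process indexed by the single function class
\[
\F \;:=\; \bigl\{\, f_{\w}:(\x,y)\mapsto \ell(y\w^{\top}\x)-\ell(y\w_*^{\top}\x)\ \mid\ \w\in\W \,\bigr\},
\]
and then to invoke a Talagrand-type concentration inequality with sharp constants --- concretely the Klein--Rio refinement of Bousquet's bound from \cite{klein2005concentration} (see also \cite[Chapter~2]{vladimir-2011-oracle}) --- followed by an elementary algebraic tidy-up that turns its conclusion into the ``$2\,\E\|P_n-P\|_{\W}+(\text{variance term})+(\text{boundedness term})$'' shape displayed in the statement.

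First I would verify the three ingredients the concentration inequality consumes. \emph{(i) Measurability/separability:} since $\ell$ is continuous ($G$-Lipschitz) and $\w\mapsto\w^{\top}\x$ is continuous, the supremum over $\w\in\W$ agrees almost surely with the supremum over a fixed countable dense subset of $\W$, so $\F$ may be treated as a countable class and the process is measurable. \emph{(ii) Uniform boundedness:} each $f_{\w}$ is uniformly bounded, with $\sup_{\w}\|f_{\w}\|_{\infty}$ controlled by $U(\W)$ (the additive constant in the statement is there precisely to cover the passage from the one-sided to the two-sided supremum $\sup_{\w}|P_nf_{\w}-Pf_{\w}|$, handled either by passing to the symmetrized class $\F\cup(-\F)$ or by the two-sided form of the Klein--Rio bound). \emph{(iii) Variance proxy:} $\sup_{\w}\mathrm{Var}(f_{\w})\leq\sup_{\w}\E[f_{\w}^{2}]=\sigma_P(\W)$, so $\sigma_P(\W)$ is a legitimate uniform variance bound for $\F$.

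With (i)--(iii) in hand, applying the Klein--Rio inequality to $n\|P_n-P\|_{\W}=\sup_{\w}|\sum_{i=1}^{n}(f_{\w}(\x_i,y_i)-\E f_{\w})|$ yields, for every $t>0$ and with probability at least $1-e^{-t}$, a bound of Bousquet's form
\[
n\|P_n-P\|_{\W}\ \leq\ \E\bigl[n\|P_n-P\|_{\W}\bigr]+\sqrt{2t\bigl(n\,\sigma_P(\W)+2U(\W)\,\E[n\|P_n-P\|_{\W}]\bigr)}+\frac{U(\W)\,t}{3}.
\]
I would then split the square root via $\sqrt{a+b}\leq\sqrt a+\sqrt b$ and apply the arithmetic--geometric-mean inequality to absorb the cross term $2\sqrt{U(\W)\,t\cdot\E[n\|P_n-P\|_{\W}]}$ into $\E[n\|P_n-P\|_{\W}]+U(\W)\,t$. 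Collecting the two resulting copies of the mean and dividing through by $n$ produces the factor $2$ in front of $\E\|P_n-P\|_{\W}$, the variance term $\sigma_P(\W)\sqrt{2t/n}$, and a single boundedness term of order $U(\W)\,t/n$; the precise coefficient $(U(\W)+3)/3$ is then obtained by tracking the numerical constants in the \cite{klein2005concentration} version rather than in Bousquet's original bound.

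The only genuine point of care is this last constant bookkeeping --- squeezing the coefficient down to the stated $(U(\W)+3)/3$ is exactly where the ``improved constants'' of \cite{klein2005concentration} are needed --- together with a clean treatment of the two-sided supremum. Conceptually there is no obstacle: once $\F$ is identified and ingredients (i)--(iii) are in place, Theorem~\ref{thm:Talagrand} is a direct instantiation of the cited concentration result.
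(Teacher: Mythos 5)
The paper supplies no proof of Theorem~\ref{thm:Talagrand} at all: it is quoted as the Klein--Rio refinement of Bousquet's form of Talagrand's inequality, with a pointer to \cite{klein2005concentration} and \cite[Chapter~2]{vladimir-2011-oracle}. Your reduction --- identify the class $\{\ell(y\w^{\top}\x)-\ell(y\w_*^{\top}\x):\w\in\W\}$, check separability, uniform boundedness by $U(\W)$, and a variance proxy, pass to $\F\cup(-\F)$ for the two-sided supremum, apply Bousquet/Klein--Rio, and absorb the cross term via $\sqrt{a+b}\le\sqrt{a}+\sqrt{b}$ and AM--GM --- is exactly the standard way to fill this in, and it is sound in outline; so in substance you are doing what the paper does, just with the routine details made explicit. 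Two bookkeeping caveats are worth recording. First, the paper defines $\sigma_P(\W)$ as a supremum of \emph{second moments} (no square root) but later uses it as the corresponding standard deviation (e.g.\ $\sigma_P(\Delta)\le 2GR\rho_{k+1}$ in the proof of Theorem~\ref{thm:rho}); your step (iii) adopts the variance reading, under which your displayed Bousquet bound would give $\sqrt{\sigma_P(\W)}\cdot\sqrt{2t/n}$ after dividing by $n$, not $\sigma_P(\W)\sqrt{2t/n}$. You should take $\sigma_P(\W)$ to be the square root of the displayed supremum (clearly the intended reading) for the variance term to come out as stated. Second, the AM--GM absorption you describe yields a final term $4U(\W)t/(3n)$ rather than $(U(\W)+3)t/(3n)$, and these do not coincide for general $U(\W)$, so the stated coefficient genuinely must be imported from the precise inequality in \cite{klein2005concentration} rather than derived from the algebra you sketch; you acknowledge this, and since the paper offers no derivation either, neither point is a substantive gap.
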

The following property of exponential concave loss function from~\cite{hazan-2007-logarithm} will be used throughout the paper.
\begin{thm} \label{thm:exponential-concave}
If a function $f:\W \mapsto \R$ is such that $exp(-\alpha f(\w))$ is concave, and
has gradient bounded by $\|\nabla f\| \leq G$, then there exists $\beta \leq \frac{1}{2}\min(\alpha, 1/[4GR])$ such that the following holds
\[
f(\w) \geq f(\w') + (\w - \w')^{\top}\nabla f(\w') + \frac{\beta}{2}\left[  \nabla f(\w')^{\top} (\w - \w') \right]^2, \forall \w, \w' \in \W.
\]
\end{thm}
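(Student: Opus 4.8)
\medskip
\noindent\textit{Proof plan.}
The plan is \emph{not} to use the hypothesized concavity of $\exp(-\alpha f)$ directly: the naive attempt to take $\beta=\alpha$ (combining that concavity with the scalar bound $-\ln(1-t)\geq t+t^2/2$) breaks down as soon as $\nabla f(\w')^{\top}(\w-\w')$ is negative and large in magnitude. Instead I would fix
\[
\beta=\frac12\min\!\Big(\alpha,\ \frac1{4GR}\Big),
\]
so that in particular $0<2\beta\leq\alpha$, and pass to the rescaled exponential $g(\w):=\exp(-2\beta f(\w))$. Since $\exp(-\alpha f)$ is nonnegative and concave by hypothesis, and $t\mapsto t^{2\beta/\alpha}$ is concave and nondecreasing on $[0,\infty)$ (as $2\beta/\alpha\leq 1$), the composition $g=\big(\exp(-\alpha f)\big)^{2\beta/\alpha}$ is concave on $\W$.

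The next step would be to feed this concavity into the tangent-plane inequality. For any $\w,\w'\in\W$, concavity of $g$ gives $g(\w)\leq g(\w')+\nabla g(\w')^{\top}(\w-\w')$, and the chain rule gives $\nabla g(\w')=-2\beta\,g(\w')\,\nabla f(\w')$; dividing by $g(\w')>0$ yields
\[
\exp\!\big(-2\beta\,(f(\w)-f(\w'))\big)\ \leq\ 1-2\beta\,\nabla f(\w')^{\top}(\w-\w').
\]
I would then set $u:=2\beta\,\nabla f(\w')^{\top}(\w-\w')$. By Cauchy--Schwarz and $\|\w-\w'\|\leq 2R$ (both $\w,\w'\in\W$) one has $|\nabla f(\w')^{\top}(\w-\w')|\leq 2GR$, hence $|u|\leq 4\beta GR\leq\tfrac12$ by the choice of $\beta$; in particular $1-u>0$, so taking logarithms is legitimate and gives $f(\w)-f(\w')\geq-\tfrac1{2\beta}\ln(1-u)$.

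Finally I would invoke the elementary scalar inequality
\[
-\ln(1-u)\ \geq\ u+\tfrac14 u^2\qquad\text{for all }u\in(-1,1)
\]
(for $u\geq 0$ it follows from $-\ln(1-u)\geq u+\tfrac12 u^2$; for $u<0$ one checks that the difference vanishes at $u=0$ and has negative derivative on $(-1,0)$), which applies since $|u|\leq\tfrac12$. Substituting back the definition of $u$ gives
\[
f(\w)-f(\w')\ \geq\ \frac1{2\beta}\Big(u+\frac{u^2}{4}\Big)\ =\ \nabla f(\w')^{\top}(\w-\w')+\frac{\beta}{2}\big[\nabla f(\w')^{\top}(\w-\w')\big]^2,
\]
which is exactly the claimed bound, with $\beta\leq\tfrac12\min(\alpha,1/(4GR))$ as required. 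The one place that needs care — and the reason both for the $1/(4GR)$ term in $\beta$ and for the boundedness of the domain $\W$ — is securing the range bound $|u|\leq\tfrac12$ that makes the scalar inequality $-\ln(1-u)\geq u+u^2/4$ applicable; given that, the rest is a routine computation.
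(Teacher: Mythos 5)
Your proof is correct. Note that the paper itself does not prove this theorem --- it is imported verbatim (as a citation) from Hazan et al.\ \cite{hazan-2007-logarithm} --- and your argument is essentially the standard one from that reference: rescale to $g=\exp(-2\beta f)=(\exp(-\alpha f))^{2\beta/\alpha}$, use concavity of $t\mapsto t^{2\beta/\alpha}$ to get concavity of $g$, apply the tangent-plane inequality, and finish with the scalar bound $-\ln(1-u)\geq u+u^2/4$ on the range $|u|\leq 1/2$ secured by the choice $\beta\leq 1/(8GR)$. Your justification of the scalar inequality for negative $u$ (where the stronger $u+u^2/2$ bound fails) and your identification of where the diameter bound and the $1/(4GR)$ cap on $\beta$ enter are both accurate.
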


The key quantity for our analysis is the following random variable:
\[
\rho(\w) = \frac{1}{2R}\sqrt{\E[|\x^{\top}(\w - \w_*)|]^2}.
\]
Evidently, $\rho(\w) \leq 1, \forall \w \in \W$. The following lemma deals with the concentration of $\rho(\w)$.
\begin{lemma}~\label{lem:1}
Define $\Delta = \left\{\w \in \W: \rho(\w) \leq \rho \right\}$. Then, with a probability $1 - e^{-t}$, we have,
\[
\frac{1}{n}\sup\limits_{\w \in \Delta} \sum_{i=1}^n \left[\x_i^{\top}(\w - \w_*)\right]^2 \leq 10R^2\left(\rho^2 + \frac{t + 1 + d\log n}{n} \right)
\]
\end{lemma}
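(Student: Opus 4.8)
\subsection*{Proof proposal for Lemma~\ref{lem:1}}

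The plan is to recast the claim as a uniform one-sided concentration bound for an empirical quadratic form. Write $\Sigma := \E[\x\x^{\top}]$, $\widehat{\Sigma} := \frac1n\sum_{i=1}^{n}\x_i\x_i^{\top}$, and put $\u := \w - \w_*$. Then $\frac1n\sum_{i=1}^{n}[\x_i^{\top}(\w-\w_*)]^2 = \u^{\top}\widehat{\Sigma}\,\u$, while the constraint $\rho(\w)\le\rho$ reads $\u^{\top}\Sigma\,\u \le 4R^2\rho^2$; together with $\|\u\|\le 2R$ this cuts out a \emph{deterministic} convex set $\Delta'\subseteq\R^d$ (it depends on $P$ only, not on the sample), so it suffices to prove that, with probability $1-e^{-t}$,
\[
\sup_{\u\in\Delta'}\u^{\top}\widehat{\Sigma}\,\u \;\le\; 10R^2\rho^2 + 10R^2\,\frac{t+1+d\log n}{n}.
\]
The two structural facts I would use about the functions $g_{\u}(\x):=(\x^{\top}\u)^2$, $\u\in\Delta'$, are: (i) $0\le g_{\u}\le 4R^2$, since $\|\x\|\le1$ and $\|\u\|\le 2R$; and (ii) the Bernstein-type variance inequality $\E[g_{\u}^2]=\E[(\x^{\top}\u)^4]\le 4R^2\,\E[(\x^{\top}\u)^2]=4R^2\,\E[g_{\u}]$. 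Property (ii), together with the fact that on $\Delta'$ the mean $\E[g_{\u}]=\u^{\top}\Sigma\,\u$ is \emph{small} (at most $4R^2\rho^2$), is exactly what makes an $O(1/n)$-type deviation possible rather than the $O(1/\sqrt n)$ that a crude uniform bound over the $2R$-ball would give; this is the ``local'' nature of the estimate.

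The core of the argument is a discretization step. Fix an $\epsilon$-net $\N$ of the ball $\{\u:\|\u\|\le 2R\}$ with $\epsilon = R/n$, so $|\N|\le(1+4R/\epsilon)^d\le(cn)^d$ and $\log|\N| = O(d\log n)$. For each fixed $\u'\in\N$ apply the (scalar) Bernstein inequality to the i.i.d.\ variables $g_{\u'}(\x_1),\dots,g_{\u'}(\x_n)$, using (i) for the range and (ii) to control the variance, and take a union bound over $\N$: with probability $1-|\N|e^{-s}$, simultaneously for all $\u'\in\N$,
\[
\frac1n\sum_{i=1}^{n}g_{\u'}(\x_i)\;\le\;\E[g_{\u'}]+\sqrt{\frac{8R^2\,\E[g_{\u'}]\,s}{n}}+\frac{4R^2 s}{3n}.
\]
Taking $s=t+\log|\N|$ makes the failure probability $e^{-t}$ and gives $s\le t+1+d\log n$ after absorbing $\log c$ into the additive ``$+1$''.

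To pass from the net to all of $\Delta'$, given $\u\in\Delta'$ choose the nearest net point $\u'\in\N$, so $\|\u-\u'\|\le\epsilon$. Since $\|\widehat{\Sigma}\|_{\mathrm{op}}\le\frac1n\sum_i\|\x_i\|^2\le1$ and $\|\Sigma\|_{\mathrm{op}}\le1$, expanding the quadratic forms around $\u'$ gives $\u^{\top}\widehat{\Sigma}\,\u\le\u'^{\top}\widehat{\Sigma}\,\u'+4R\epsilon+\epsilon^2$ and $\E[g_{\u'}]=\u'^{\top}\Sigma\,\u'\le(\sqrt{\u^{\top}\Sigma\,\u}+\epsilon)^2\le(2R\rho+\epsilon)^2$. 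Substituting these into the displayed Bernstein estimate, splitting the cross term $\sqrt{R^2\,\E[g_{\u'}]\,s/n}$ via $\sqrt{ab}\le\kappa a+b/(4\kappa)$ into a multiple of $R^2\rho^2$ and a multiple of $R^2 s/n$, and plugging in $\epsilon=R/n$ (so $4R\epsilon,\epsilon^2=O(R^2/n)$ are negligible) and $s\le t+1+d\log n$, all terms collect into the right-hand side above. (One may assume $s\le n$, since otherwise the claimed bound is $\ge 10R^2>4R^2\ge\sup_{\u\in\Delta'}\u^{\top}\widehat{\Sigma}\,\u$ and holds trivially.) The free constant $\kappa$ and the net scale are then chosen to make the overall constant $10$.

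I expect the only genuine work to be this constant bookkeeping — picking $\epsilon$ and $\kappa$ and checking that $\log|\N|$ really does fit under ``$d\log n$'' plus the additive ``$+1$'' — while the conceptual point is just the variance inequality (ii) that localizes the estimate. The $\log n$ factor is the price for the clean constants obtained via a net; a Dudley-type chaining bound on the local Rademacher complexity of $\{g_{\u}:\u\in\Delta'\}$ would remove it at the cost of messier constants, which is presumably why a net argument is preferred here.
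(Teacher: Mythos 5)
Your proposal follows essentially the same route as the paper: a pointwise Bernstein bound exploiting the variance--mean inequality $\E[(\x^{\top}\u)^4]\le 4R^2\,\E[(\x^{\top}\u)^2]$ together with the smallness of $\E[(\x^{\top}\u)^2]$ on $\Delta$, a union bound over an $\epsilon$-net of size $e^{O(d\log n)}$, and a Lipschitz approximation step to extend from the net to all of $\Delta$. The only substantive difference is the net resolution --- the paper takes $\epsilon=3R/\sqrt{n}$ (so $\log|\N|\le (d/2)\log n$ fits strictly under $d\log n$), whereas your $\epsilon=R/n$ gives $\log|\N|\approx d\log(5n)$, whose excess $d\log 5$ does not quite hide in the additive ``$+1$'' --- but this is exactly the constant bookkeeping you already flagged, and either choice yields the stated rate.
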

\begin{proof}
Fix a $\w \in \Delta :=  \left\{\w \in \W: \rho(\w) \leq \rho \right\}$. Using the standard Bernstein's inequality~\cite{boucheron2004concentration}, we have, with a probability $1 - e^{-t}$,
\[
\left|\frac{1}{n}\sum_{i=1}^n [\x_i^{\top}(\w - \w_*)]^2 - \E[((\w - \w_*)^{\top}\x)^2]\right| \leq \frac{16R^2 t}{3n} + 2R\sqrt{\frac{2\E[((\w - \w_*)^{\top}\x)^2]t}{n}}
\]
By   definition of  the  domain $\Delta$, i.e., $\rho(\w) = \frac{1}{2R}\sqrt{\E[|\x^{\top}(\w - \w_*)|]^2} \leq \rho$ and above concentration result we obtain:
\[
\frac{1}{n}\sum_{i=1}^n [\x_i^{\top}(\w - \w_*)]^2 \leq 4R^2\left(\rho^2 + \frac{4t}{3n} + \rho\sqrt{\frac{2 t}{n}}\right) \leq 10R^2\left(\rho^2 + \frac{t}{n}\right)
\]
Next, we consider a discrete version of the space $\Delta$. Let $\N(\Delta, \epsilon)$ be the proper $\epsilon$-net of $\Delta$. Since $\Delta \subseteq \W$, we have
\[
|\N(\Delta, \epsilon)| \leq |\N(\W, \epsilon)| \leq \left(\frac{3R}{\epsilon}\right)^d
\]
Using the union bound, we have, with a probability $1 - e^{-t}$, for any $\w \in \N(\Delta, 3R/\sqrt{n})$,
\[
\frac{1}{n}\sum_{i=1}^n [\x_i^{\top}(\w - \w_*)]^2 \leq 10R^2\left(\rho^2 + \frac{t + d \log n}{n}\right)
\]
Since for any $\w \in \Delta$, there exists $\w' \in \N(\Delta, 3R/\sqrt{n})$, such that $\|\w - \w'\|_2 \leq 3R/\sqrt{n}$, we have, with a probability $1 - e^{-t}$,  for any $\w \in \Delta$,
\[
\frac{1}{n}\sum_{i=1}^n [\x_i^{\top}(\w - \w_*)]^2 \leq 10R^2\left(\rho^2 + \frac{t + 1 + d \log n}{n} \right),
\]
as desired.

\end{proof}
Define $\rh = \rho(\wh_*)$. The next theorem allows us to bound the excess risk using the random variable $\rh$.
\begin{thm} \label{thm:rho}
With a probability $1 - 2me^{-t}$, where $m = \lceil \log n \rceil$, we have
\[
\lb(\wh_*) - \lb(\w_*) \leq GR\left(26\left[\rh \sqrt{\frac{\tt}{n}} + \frac{\tt}{n} \right] + 6\rh\sqrt{\frac{\tt}{n}} + \frac{2\tt}{n}\right) + \frac{3 \tt}{n}
\]
where $\tt = t + 2 + d\log n$.
\end{thm}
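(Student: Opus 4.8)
This is a localization-and-peeling estimate: I would bound the excess risk by a uniform deviation over a shrinking family of sublevel sets of $\rho$, control that deviation with Theorem~\ref{thm:Talagrand} and Lemma~\ref{lem:1}, and glue the scales together by peeling. Exp-concavity of the losses (Theorem~\ref{thm:exponential-concave}) is \emph{not} needed here -- only that $\ell$ is $G$-Lipschitz, $\|\x\|\leq1$, $\|\w\|\leq R$, and that $\wh_*$ is the empirical risk minimizer; exp-concavity and Assumption~(I) enter only afterwards, when $\rh$ itself is bounded. For $\rho\in(0,1]$ set $\Delta_\rho=\{\w\in\W:\rho(\w)\leq\rho\}$ and let $\|P_n-P\|_{\Delta_\rho}$, $\sigma_P(\Delta_\rho)$, $U(\Delta_\rho)$ be the quantities of Theorem~\ref{thm:Talagrand} with $\W$ replaced by $\Delta_\rho$.

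\emph{Step 1 -- reduction.} Since $\wh_*$ minimizes the empirical risk, $\frac1n\sum_i[\ell(y_i\wh_*^\top\x_i)-\ell(y_i\w_*^\top\x_i)]\leq0$, so splitting $\lb(\wh_*)-\lb(\w_*)$ into this (nonpositive) empirical part plus the population--empirical gap gives, for every $\rho$ with $\wh_*\in\Delta_\rho$,
\[
\lb(\wh_*)-\lb(\w_*)\;\leq\;\|P_n-P\|_{\Delta_\rho},
\]
and trivially $\wh_*\in\Delta_{\rh}$ (recall $\rho(\w)\leq1$, so $\Delta_1=\W$).

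\emph{Step 2 -- the three terms on $\Delta_\rho$.} Lipschitzness gives $(\ell(y\w^\top\x)-\ell(y\w_*^\top\x))^2\leq G^2(\x^\top(\w-\w_*))^2$; taking expectations and using the definition of $\rho$ yields $\sigma_P(\Delta_\rho)\leq4G^2R^2\rho^2$, while $U(\Delta_\rho)\leq2GR$ pointwise. For the expected supremum I would symmetrize, strip $\ell$ by the Ledoux--Talagrand contraction principle, and bound the resulting local Rademacher average of the linear class by its empirical $L_2$-radius on $\Delta_\rho$ -- via a $3R/\sqrt n$-net of $\Delta_\rho$ (size $\leq n^{d/2}$) with a sub-Gaussian maximal inequality, or by noting the relevant vectors lie in a $d$-dimensional subspace. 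Lemma~\ref{lem:1} controls that radius, $\sup_{\w\in\Delta_\rho}\frac1n\sum_i(\x_i^\top(\w-\w_*))^2\leq10R^2(\rho^2+\tt/n)$, and combining the two gives $\E\|P_n-P\|_{\Delta_\rho}\leq c\,GR(\rho\sqrt{\tt/n}+\tt/n)$ for an absolute constant $c$.

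\emph{Step 3 -- peeling and assembly.} Put $\rho_j=2^{-j}$ for $j=0,\dots,m-1$ with $m=\lceil\log n\rceil$, so $\rho_{m-1}\leq\sqrt{\tt/n}$. On the intersection of the $2m$ events -- Theorem~\ref{thm:Talagrand} on each $\Delta_{\rho_j}$ and Lemma~\ref{lem:1} at each scale $\rho_j$ -- which holds with probability at least $1-2me^{-t}$, Theorem~\ref{thm:Talagrand} together with Step~2 gives for every $j$
\[
\|P_n-P\|_{\Delta_{\rho_j}}\;\leq\;c'GR\big(\rho_j\sqrt{\tfrac{\tt}{n}}+\tfrac{\tt}{n}\big)+2GR\rho_j\sqrt{\tfrac{2t}{n}}+\tfrac{(2GR+3)t}{3n}.
\]
Choosing $j$ with $\rho_{j+1}<\rh\leq\rho_j$ (or $j=m-1$ if $\rh\leq\rho_{m-1}$) we get $\rho_j\leq2\max(\rh,\sqrt{\tt/n})$, and Step~1 gives $\lb(\wh_*)-\lb(\w_*)\leq\|P_n-P\|_{\Delta_{\rho_j}}$; substituting this $\rho_j$ and absorbing $t$ and $d\log n$ into $\tt$ (using $t\leq\tt$, $d\log n\leq\tt$) collapses the bound to the stated form with the constants $26,6,2,3$.

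\emph{Main obstacle.} The crux is Step~2: forcing the localized expected Rademacher average to scale like $\rho\sqrt{\tt/n}$ rather than the unlocalized $R/\sqrt n$. This requires pushing the empirical second-moment control of Lemma~\ref{lem:1} through the contraction step and a covering-number (or finite-dimensionality) bound, with care that the $d\log n$ produced there is no larger than the one already sitting inside $\tt$; once that is in hand, the peeling and the matching of the numerical constants are routine but bookkeeping-heavy.
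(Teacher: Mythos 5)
Your proposal follows essentially the same route as the paper's proof: reduce to the uniform deviation $\|P_n-P\|$ over the localized class $\Delta_\rho$ via the ERM property, apply the Klein--Rio/Talagrand bound with $\sigma_P(\Delta_\rho)\leq 2GR\rho$ and $U(\Delta_\rho)\leq 2GR$, control the localized expected supremum by symmetrization, contraction, and the empirical second-moment bound of Lemma~\ref{lem:1}, and then peel over $m$ dyadic scales of $\rho$ with a union bound (the paper peels upward from $\rho_0=1/n$ and treats $\rh\leq 1/n$ by a separate direct Lipschitz bound, while you peel downward from $1$ and absorb the smallest scale into $\sqrt{\tt/n}$ --- an immaterial difference). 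Your flagged obstacle in Step~2, namely justifying the passage from the localized Rademacher average to the empirical $L_2$-radius via a net or the finite-dimensionality of the linear class, is exactly the step the paper itself states without detailed justification, so your plan is, if anything, slightly more careful there.
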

Taking this statement as given for the moment, we proceed with the proof of Theorem~\ref{thm:excess-risk}, returning later to establish the claim stated in Theorem~\ref{thm:rho}. Our overall strategy of proving Theorem~\ref{thm:excess-risk} is to first bound $\rh$ by using the property of exp-concave function and the result from Theorem~\ref{thm:rho}, and then bound the excess risk. More specifically, using the result from Theorem~\ref{thm:rho}, we have, with a probability at least $1 - 2me^{-t}$,
\begin{eqnarray}
\lb(\wh_*) - \lb(\w_*) \leq GR\left(26\left[\rh \sqrt{\frac{\tt}{n}} + \frac{\tt}{n} \right] + 6\rh\sqrt{\frac{\tt}{n}} + \frac{2\tt}{n}\right) + \frac{3 \tt}{n} \label{eqn:bound-2-1}
\end{eqnarray}
Using the property of exp-concave  loss functions stated in Theorem~\ref{thm:exponential-concave}, we have
\begin{eqnarray*}
\lb(\wh_*) - \lb(\w_*) & \geq & (\wh_* - \w_*)^{\top}\nabla \lb(\w_*) + \frac{\beta}{2}\E\left[[\ell'(y\w_*^{\top}\x)(\wh_* - \w_*)^{\top}\x]^2 \right] \\
& \geq & \frac{\beta}{2}\E\left[[\ell'(y\w_*^{\top}\x)(\wh_* - \w_*)^{\top}\x]^2 \right]
\end{eqnarray*}
where the second step follows from the fact that $\w_*$ minimizes $\lb(\w)$ over the domain $\W$ and as a result $(\wh_* - \w_*)^{\top}\nabla \lb(\w_*) \geq 0$. We then use Assumption (I) to get
\[
\E\left[[\ell'(y\w_*^{\top}\x)(\wh_* - \w_*)^{\top}\x]^2 \right] \geq \theta \E\left[((\wh_* - \w_*)^{\top}\x)^2\right] = 4\theta R^2\rh^2
\]
and therefore
\begin{eqnarray}
\lb(\wh_*) - \lb(\w_*) \geq 2\beta\theta R^2 \rh^2. \label{eqn:bound-2-2}
\end{eqnarray}
Combining the bounds in (\ref{eqn:bound-2-1}) and (\ref{eqn:bound-2-2}), we have, with a probability $1 - 2me^{-t}$,
\[
\rh^2 \leq \frac{G}{2\beta \theta R}\left(32\rh\sqrt{\frac{\tt}{n}} + 28\frac{\tt}{n}\right) + \frac{3\tt}{2\beta\theta R^2 n}
\]
implying that
\[
\rh \leq \max\left(\frac{32 G}{\beta\theta R} , \sqrt{\frac{G}{\beta\theta R}\left(28 + \frac{3}{GR}\right)} \right)\sqrt{\frac{\tt}{n}}.
\]
We derive the final bound for $\rh$ by plugging the bound for $\beta$. The excess risk bound is completed by plugging the above bound for $\rh$.

We now turn to proving the result stated in Theorem~\ref{thm:rho}.\\

\begin{proof}[of Theorem~\ref{thm:rho}]
Our analysis will be based on the technique of local Rademacher complexity~\cite{bartlett2005local,koltchinskii2006local,vladimir-2011-oracle}.  The notion of local Rademacher complexity works by considering Rademacher averages of smaller subsets
of the hypothesis set. It generally leads to sharper learning bounds which, under certain general conditions, guarantee a faster convergence rate. Define $\rho_0 = 1/n$. We divide the range $[\rho_0, 1]$ into $m = \lceil\log_2 n\rceil$ segments, with $[\rho_0, \rho_1]$, $[\rho_1, \rho_2]$, ..., $[\rho_{m-1}, \rho_{m}]$, where $\rho_k = \rho_0 2^k$. Let $\rh = \rho(\wh_*)$. Note that $\rh$ is a random variable depending on the sampled training examples.

As the first step, we assume that $\rh \in [\rho_k, \rho_{k+1}]$ for some fixed $k$. Define domain $\Delta$ as
\[
\Delta = \left\{ \w \in \W: \rho(\w) \leq \rho_{k+1} \right\}
\]
Using the Telegrand inequality, with a probability at least $1 - e^{-t}$, we have
\begin{eqnarray}
\lb(\wh_*) - \lb(\w_*) \leq 2\E\|P_n - P\|_{\Delta} + \sigma_P(\Delta)\sqrt{\frac{2t}{n}} + \frac{\left(U(\Delta) + 3\right)t}{n} \label{eqn:bound-1-1}
\end{eqnarray}
We now bound each item on the right hand side of (\ref{eqn:bound-1-1}). First, we bound $\E\|P_n - P\|_{\Delta}$ as
\begin{eqnarray*}
\E\|P_n - P\|_{\Delta} & = & \frac{2}{n}\E\left[\sup\limits_{\w \in \Delta} \sum_{i=1}^n \sigma_i\left(\ell(y_i\w^{\top}\x_i) - \ell(y_i\w_*^{\top}\x_i)\right) \right] \\
& \leq & \frac{4G}{n}\E\left[\sup\limits_{\w \in \Delta} \sum_{i=1}^n y_i\sigma_i\x_i^{\top}(\w - \w_*)\right]
\end{eqnarray*}
where $\sigma_1, \ldots, \sigma_n$ are Rademacher random variables and the second step utilizes the contraction property of Rademacher complexity.

To bound $\E\|P_n - P\|_{\Delta}$, we need to  bound $\sup_{\w\in \Delta} \sum_{i=1}^n [\x_i^{\top}(\w - \w_*)]^2$. Using Lemma~\ref{lem:1}, we have, with a probability $1 - e^{-t}$,
\begin{eqnarray*}
\E\|P_n - P\|_{\Delta} & \leq & \frac{4G}{\sqrt{n}}\sqrt{\sup\limits_{\w \in \Delta} \sum_{i=1}^n [\x_i^{\top}(\w - \w_*)]^2} \leq \frac{13GR}{\sqrt{n}}\sqrt{\rho_{k+1}^2 + \frac{t + 1 + d\log n}{n}} \\
& \leq & 13GR\left(\frac{\sqrt{t + 1 + d\log n}}{n} + \frac{\rho_{k+1}}{\sqrt{n}} \right)
\end{eqnarray*}
Next, we bound $\sigma_P(\Delta)$ and $U(\Delta)$, i.e.
\begin{eqnarray*}
\lefteqn{\sigma^2_P(\Delta)} \\
& \leq& \sup_{\w \in \Delta} \E\left[(\ell(y\w^{\top}\x) - \ell(y\w_*^{\top}\x))^2 \right] \\
& \leq&  \sup_{\w \in \Delta}G^2\E[((\w - \w_*)^{\top}\x)^2] =  4R^2G^2\rho_{k+1}^2
\end{eqnarray*}
and $U(\Delta) \leq 2GR$. By putting the above results together, under the assumption $\rh \in [\rho_k, \rho_{k+1}]$, we have, with a probability $1 - 2 e^{-t}$,
\begin{eqnarray}
\lb(\wh_*) - \lb(\w_*) \leq 26GR\left[\frac{\rho_{k+1}}{\sqrt{n}} + \frac{\sqrt{t+1+d\log n}}{n} \right] + 2GR\rho_{k+1}\sqrt{\frac{2t}{n}} + \frac{(2GR + 3)t}{n} \label{eqn:bound-1-2}
\end{eqnarray}
Define $\tt = t + 2 + d\log n$. Using the fact that $\rho_{k+1} \leq 2\rh$, we can rewrite the bound in (\ref{eqn:bound-1-2}) as
\[
\lb(\wh_*) - \lb(\w_*) \leq 26GR\left[\frac{\rh \tt}{\sqrt{n}} + \frac{\sqrt{\tt}}{n} \right] + 6GR\rh\sqrt{\frac{\tt}{n}} + \frac{(2GR + 3)\tt}{n}
\]
By taking the union bound over all the segments, with probability  $1 - 2me^{-t}$, for any $\rh \in [\rho_0, 1]$, we have
\begin{eqnarray}
\lb(\wh_*) - \lb(\w_*) \leq 26GR\left[\rh\sqrt{\frac{\tt}{n}} + \frac{\sqrt{\tt}}{n} \right] + 6GR\rh\sqrt{\frac{\tt}{n}} + \frac{(2GR + 3)\tt}{n} \label{eqn:bound-1-3}
\end{eqnarray}
Finally, when $\rh \leq \rho_0 = 1/n$, we obtain
\begin{eqnarray}
\lb(\wh_*) - \lb(\w_*) \leq 2R\rho_0G \leq \frac{2RG}{n} \label{eqn:bound-1-4}
\end{eqnarray}
We complete the proof by combining the bounds in (\ref{eqn:bound-1-3}) and (\ref{eqn:bound-1-4}).
\end{proof}
%
\subsection{Proof of Theorem~\ref{thm:1}} \label{sec:stochastic}
We now turn to proving the main result on the excess risk for online setting. Define the covariance matrix $\H$ as $\H = \E_{\x}[\x\x^{\top}]$.  The following theorem bounds $\lb(\w_i) - \lb(\w_*)$ by exploiting the property of exponentially concave functions (i.e., Theorem~\ref{thm:exponential-concave}) and Assumption (II). Define $\delta_i$ as
\begin{eqnarray}
    \delta_i =  \nabla \lb(\w_i) - \ell'(y_i \x_i^{\top}\w_i) \x_i. \label{eqn:delta}
\end{eqnarray}
\begin{lemma} \label{lem:bound-1}
Suppose Assumption (II) holds. We have
\begin{eqnarray}
\lefteqn{\lb(\w_i) - \lb(\w_*) + \frac{\theta \beta}{3}\|\w_i - \w_*\|_\H^2} \nonumber \\
& \leq & \frac{\|\w_i - \w_*\|_{\M_{i-1}}^2}{2\eta_1} - \frac{\|\w_{i+1} - \w_*\|_{\M_{i}}^2}{2\eta_1} + \frac{\eta_1 G^2}{2}\x_i^{\top}\M^{-1}_i\x_i + (\w_i - \w_*)^{\top}\delta_i \nonumber \\
&      & + \frac{\theta\beta}{6}\left( \left[(\w_i - \w_*)^{\top}\x_i\right]^2- \|\w_i - \w_*\|_\H^2\right), \label{eqn:bound-iter}
\end{eqnarray}
where $\|\w\|_{\H}^2 = \langle \w, \H\w \rangle$.
\end{lemma}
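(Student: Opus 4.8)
The plan is to combine a single ``one-point'' consequence of exp-concavity with the usual mirror-descent bookkeeping for the update~(\ref{eqn:update-1}). First I would establish the deterministic inequality
\[
\lb(\w) - \lb(\w_*) + \frac{\theta\beta}{2}\|\w - \w_*\|_\H^2 \;\le\; \langle \nabla\lb(\w),\, \w - \w_*\rangle, \qquad \forall\, \w \in \W .
\]
To get this, apply Theorem~\ref{thm:exponential-concave} to the map $\u \mapsto \ell(y\u^{\top}\x)$ — which is exp-concave with gradient of norm at most $G$, since $|\ell'|\le G$ and $\|\x\|\le1$ — at the pair $(\w_*,\w)$, rearrange, and then take expectation over $(\x,y)$. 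This converts the curvature term into $\frac{\beta}{2}\E\big[[\ell'(y\w^{\top}\x)]^2(\x^{\top}(\w-\w_*))^2\big]$, which Assumption~(II) lower bounds by $\frac{\theta\beta}{2}\|\w-\w_*\|_\H^2$. Instantiating this at $\w = \w_i$ is legitimate because $\w_i$ is measurable with respect to the first $i-1$ examples and the inequality holds pointwise; this is what produces the two $\|\w_i-\w_*\|_\H^2$ terms appearing in~(\ref{eqn:bound-iter}).

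Next I would run the standard analysis of the projected update. Rescaling the objective in~(\ref{eqn:update-1}) by $i$ shows that $\w_{i+1} = \arg\min_{\w\in\W}\eta_1\langle\w,\v_i\rangle + \tfrac12\|\w-\w_i\|_{\M_i}^2$. The first-order optimality condition with comparator $\w_*$, the three-point (Bregman) identity for the $\M_i$-norm, a Young-type step to cancel the $\w_{i+1}-\w_i$ contribution, and the estimate $\|\v_i\|_{\M_i^{-1}}^2 = [\ell'(y_i\w_i^{\top}\x_i)]^2\,\x_i^{\top}\M_i^{-1}\x_i \le G^2\x_i^{\top}\M_i^{-1}\x_i$ together give
\[
\langle\v_i,\w_i-\w_*\rangle \;\le\; \frac{\|\w_i-\w_*\|_{\M_i}^2}{2\eta_1} - \frac{\|\w_{i+1}-\w_*\|_{\M_i}^2}{2\eta_1} + \frac{\eta_1 G^2}{2}\,\x_i^{\top}\M_i^{-1}\x_i .
\]
Using $\M_i = \M_{i-1} + \x_i\x_i^{\top}$ to replace the leading $\M_i$ by $\M_{i-1}$ introduces an extra $\tfrac{1}{2\eta_1}[\x_i^{\top}(\w_i-\w_*)]^2$; since the algorithm takes $\eta_1 = \max(1,3/[\theta\beta]) \ge 3/(\theta\beta)$, this term is at most $\tfrac{\theta\beta}{6}[\x_i^{\top}(\w_i-\w_*)]^2$. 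Finally, substituting $\v_i = \nabla\lb(\w_i) - \delta_i$ rewrites the left-hand side as $\langle\nabla\lb(\w_i),\w_i-\w_*\rangle - (\w_i-\w_*)^{\top}\delta_i$, hence bounds $\langle\nabla\lb(\w_i),\w_i-\w_*\rangle$ by the four telescoping/gradient terms of~(\ref{eqn:bound-iter}) plus $(\w_i-\w_*)^{\top}\delta_i$ and $\tfrac{\theta\beta}{6}[\x_i^{\top}(\w_i-\w_*)]^2$.

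Combining the two displays at $\w = \w_i$ and splitting $\tfrac{\theta\beta}{2}\|\w_i-\w_*\|_\H^2 = \tfrac{\theta\beta}{3}\|\w_i-\w_*\|_\H^2 + \tfrac{\theta\beta}{6}\|\w_i-\w_*\|_\H^2$, then moving the second piece to the right-hand side, yields exactly~(\ref{eqn:bound-iter}). The one genuinely delicate point is the first step: one must invoke exp-concavity at the level of the per-example loss $\ell(y\w^{\top}\x)$, not of $\lb$ itself, so that the curvature term comes out in the quadratic form $\E[[\ell']^2(\x^{\top}\cdot)^2]$ to which Assumption~(II) applies; everything after that is routine telescoping algebra, and the only structural subtlety is that the large step size $\eta_1\ge 3/(\theta\beta)$ is there precisely to absorb the $[\x_i^{\top}(\w_i-\w_*)]^2$ term generated by the $\M_{i-1}\to\M_i$ mismatch.
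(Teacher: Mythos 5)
Your proposal is correct and follows essentially the same route as the paper's Appendix~A proof: apply Theorem~\ref{thm:exponential-concave} to the per-example loss, take expectations and invoke Assumption~(II) to get the $\frac{\theta\beta}{2}\|\w_i-\w_*\|_{\H}^2$ curvature term, run the standard mirror-descent telescoping for the update~(\ref{eqn:update-1}) in the $\M_i$-norm, absorb the rank-one correction $\frac{1}{2\eta_1}[\x_i^{\top}(\w_i-\w_*)]^2$ using $\eta_1 \geq 3/(\theta\beta)$, and split $\frac{\theta\beta}{2} = \frac{\theta\beta}{3} + \frac{\theta\beta}{6}$. If anything, your write-up is more careful than the paper's (which contains several typographical slips in the corresponding display), and you correctly flag the one genuinely delicate point, namely that exp-concavity must be invoked per example before averaging so that Assumption~(II) applies to the resulting quadratic form.
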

\begin{lemma} We have \label{lemma:trace}
\[
   \x_i^{\top}\M_i^{-1}\x_i \leq \ln \det(\M_i) - \ln \det(\M_{i-1})
\]
\end{lemma}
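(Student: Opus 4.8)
The plan is to exploit the concavity of the log-determinant function on the cone of positive definite matrices, which is the standard ``volumetric'' argument underlying the $O(d\log n)$ regret of online Newton step. First I would observe that since $\M_0 = a\I \succ 0$ with $a > 0$ and $\M_i = \M_{i-1} + \x_i\x_i^{\top} \succeq \M_{i-1}$, every $\M_i$ is positive definite, hence invertible, so all the quantities in the statement are well defined.

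Next I would invoke the fact that $F(M) := \ln\det(M)$ is concave on the set of positive definite matrices, with gradient $\nabla F(M) = M^{-1}$. Concavity gives the first-order (tangent) upper bound $F(Y) \leq F(X) + \tr\big(X^{-1}(Y - X)\big)$ for all positive definite $X,Y$. Applying this with $X = \M_i$ and $Y = \M_{i-1}$, and using $\M_{i-1} - \M_i = -\x_i\x_i^{\top}$ together with the cyclic property of the trace, yields
\[
\ln\det(\M_{i-1}) \leq \ln\det(\M_i) - \tr\big(\M_i^{-1}\x_i\x_i^{\top}\big) = \ln\det(\M_i) - \x_i^{\top}\M_i^{-1}\x_i,
\]
and rearranging gives exactly the claimed inequality.

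A more elementary alternative, which I would keep in reserve, is to apply the matrix determinant lemma to write $\det(\M_i) = \det(\M_{i-1})\,(1 + \x_i^{\top}\M_{i-1}^{-1}\x_i)$, so that $\ln\det(\M_i) - \ln\det(\M_{i-1}) = \ln(1 + z)$ with $z := \x_i^{\top}\M_{i-1}^{-1}\x_i \geq 0$; then the Sherman--Morrison identity gives $\x_i^{\top}\M_i^{-1}\x_i = z/(1+z)$, and the scalar inequality $z/(1+z) \leq \ln(1+z)$ for $z \geq 0$ closes the argument.

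There is no genuine obstacle here: the statement is the classical potential-telescoping lemma. The only points requiring minor care are verifying positive definiteness of the $\M_i$ (so that the inverses and determinant identities are legitimate) and, in the first argument, recalling the gradient formula $\nabla \ln\det(M) = M^{-1}$; both are routine.
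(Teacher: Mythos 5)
Your main argument is correct and is, at bottom, the same as the paper's: the paper's Appendix~B proof is nothing but a hands-on verification of the first-order concavity bound for $\ln\det$ that you invoke. It writes $\x_i^{\top}\M_i^{-1}\x_i = \tr(\I - \M_i^{-1}\M_{i-1})$, diagonalizes $\M_i^{-1/2}\M_{i-1}\M_i^{-1/2}$, and applies the scalar inequality $1-\gamma \leq -\ln\gamma$ eigenvalue by eigenvalue --- which is exactly a proof of $F(Y) \leq F(X) + \tr\bigl(X^{-1}(Y-X)\bigr)$ for $F = \ln\det$ in this instance. So your first route and the paper's differ only in whether the tangent bound on $\ln\det$ is cited or rederived. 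Your reserve argument via the matrix determinant lemma and Sherman--Morrison is genuinely different and arguably more informative: it identifies $\x_i^{\top}\M_i^{-1}\x_i$ exactly as $z/(1+z)$ with $z = \x_i^{\top}\M_{i-1}^{-1}\x_i \geq 0$ and reduces the lemma to the elementary scalar fact $z/(1+z) \leq \ln(1+z)$, at the modest cost of being specific to rank-one updates (whereas the concavity/eigenvalue argument works for an arbitrary positive semidefinite increment $\M_i - \M_{i-1}$). Both routes are complete; your remark that positive definiteness of each $\M_i$ (from $\M_0 = a\I \succ 0$) must be checked is a point the paper leaves implicit.
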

By using Lemma~\ref{lemma:trace} and adding the inequalities in (\ref{eqn:bound-iter}) over all the iterations, we have
\begin{eqnarray}
\lefteqn{\sum_{i=1}^{n}\lb(\w_i) - \lb(\w_*) + \frac{\theta\beta}{3} \sum_{i=1}^{n}\|\w_i - \w_*\|^2_{\H}} \nonumber \\
& \leq & \frac{\|\w_1 - \w_*\|_{\M_0}^2}{2\eta_1} + \frac{\eta_1 G^2}{2}\underbrace{\left(\log\det(\M_n) - \log \det(\M_0)\right)}_{\equiv \Delta_1} + \underbrace{\sum_{i=1}^{n} (\w_i - \w_*)^{\top}\delta_i}_{\equiv \Delta_2} \nonumber \\
&  & + \frac{\theta\beta}{6}\underbrace{\sum_{i=1}^n \left[(\w_i - \w_*)^{\top}\x_i\right]^2- \|\w_i - \w_*\|_\H^2}_{\equiv \Delta_3}. \label{eqn:comb-bound}
\end{eqnarray}
We will bound $\Delta_1$, $\Delta_2$, and $\Delta_3$, separately. We start by bounding $\Delta_1$ as indicated by the following lemma.
\begin{lemma} \label{lem:logdet}
\[
\Delta_1 \leq d\log\left(1 + \frac{n}{ad}\right)
\]
\end{lemma}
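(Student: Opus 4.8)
The plan is to bound $\Delta_1 = \log\det(\M_n) - \log\det(\M_0)$ purely by a volumetric / eigenvalue argument, exploiting the fact that each update adds a rank-one term $\x_i\x_i^\top$ with $\|\x_i\|\le 1$, and that $\M_0 = a\I$. First I would write $\M_n = a\I + \sum_{i=1}^n \x_i\x_i^\top$, so that $\log\det(\M_n) - \log\det(\M_0) = \log\det\!\left(\I + \tfrac1a\sum_{i=1}^n \x_i\x_i^\top\right) = \sum_{j=1}^d \log(1 + \lambda_j/a)$, where $\lambda_1,\dots,\lambda_d \ge 0$ are the eigenvalues of $\sum_{i=1}^n \x_i\x_i^\top$.

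Next I would control $\sum_j \lambda_j$: since $\tr\!\left(\sum_{i=1}^n \x_i\x_i^\top\right) = \sum_{i=1}^n \|\x_i\|^2 \le n$ (using $\|\x_i\|\le 1$), we have $\sum_{j=1}^d \lambda_j \le n$. Then the bound follows from concavity of $t \mapsto \log(1+t/a)$: by Jensen's inequality applied to the uniform average over $j=1,\dots,d$,
\[
\sum_{j=1}^d \log\!\left(1 + \frac{\lambda_j}{a}\right) \le d\,\log\!\left(1 + \frac{1}{ad}\sum_{j=1}^d \lambda_j\right) \le d\,\log\!\left(1 + \frac{n}{ad}\right),
\]
which is exactly the claimed inequality. (The last step also uses monotonicity of $\log(1+\cdot/a)$ for the substitution $\sum_j \lambda_j \le n$.)

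There is no real obstacle here; this is the standard potential-function / elliptical-volume estimate familiar from second-order online learning and linear bandit analyses, and the only things being used are $\|\x_i\| \le 1$, $\M_0 = a\I$, cyclicity of the trace, and concavity of the logarithm. The one point to state carefully is that $\log\det$ of a sum of rank-one updates telescopes correctly and that all eigenvalues $\lambda_j$ are nonnegative (so that Jensen applies on the correct domain), both of which are immediate since $\sum_i \x_i\x_i^\top \succeq 0$.
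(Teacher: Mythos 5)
Your proposal is correct and follows essentially the same route as the paper: diagonalize $\sum_i \x_i\x_i^\top$, use $\tr\bigl(\sum_i \x_i\x_i^\top\bigr) \le n$, and conclude via concavity of $\log(1+\cdot/a)$ (the paper phrases this last step as the constrained maximum being attained at $\lambda_j = n/d$, which is the same Jensen-type argument). No gaps.
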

To bound $\Delta_2$, we define  $A  =  \sum_{i=1}^n \|\w_i - \w_*\|_{\H}^2$. Using the Berstein inequality for martingale~\cite{boucheron2004concentration} and peeling process~\cite{vladimir-2011-oracle}, we have the following lemmas for bounding $\Delta_2$ and $\Delta_3$.
\begin{lemma} We have\label{lem:Delta2}
\[
\Pr\left(A \leq \frac{4R^2}{n}\right) + \Pr\left(\Delta_2 \leq \left[\frac{6G^2}{\theta\beta} + G R\right] t + \frac{\theta\beta}{6}A\right) \geq 1 - me^{-t}
\]
where $ m = \lceil 2 \log_2 n \rceil$.
\end{lemma}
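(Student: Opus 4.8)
The plan is to control $\Delta_2 = \sum_{i=1}^n (\w_i - \w_*)^\top \delta_i$ by recognizing it as a sum of martingale differences. First I would observe that $\delta_i = \nabla\lb(\w_i) - \ell'(y_i\x_i^\top\w_i)\x_i$ has conditional mean zero given the past, since $\w_i$ is measurable with respect to the examples $(\x_1,y_1),\dots,(\x_{i-1},y_{i-1})$ and $\E[\ell'(y_i\x_i^\top\w_i)\x_i \mid \mathcal{F}_{i-1}] = \nabla\lb(\w_i)$. Hence the partial sums $\sum_{i\le k}(\w_i-\w_*)^\top\delta_i$ form a martingale. To apply Bernstein's inequality for martingales I need two ingredients: a uniform bound on each term, namely $|(\w_i-\w_*)^\top\delta_i| \le 2R \cdot 2G = 4GR$ (using $\|\w_i-\w_*\|\le 2R$ and $\|\delta_i\| \le \|\nabla\lb(\w_i)\| + |\ell'(\cdot)|\|\x_i\| \le 2G$), and a bound on the conditional variance $\sum_i \E[((\w_i-\w_*)^\top\delta_i)^2 \mid \mathcal{F}_{i-1}]$. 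For the variance I would bound $\E[((\w_i-\w_*)^\top\delta_i)^2\mid\mathcal{F}_{i-1}] \le \E[((\w_i-\w_*)^\top \ell'(y_i\x_i^\top\w_i)\x_i)^2\mid\mathcal{F}_{i-1}] \le G^2 \E[((\w_i-\w_*)^\top\x_i)^2\mid\mathcal{F}_{i-1}] = G^2\|\w_i-\w_*\|_\H^2$, so the total conditional variance is at most $G^2 A$.

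With these two bounds, Bernstein gives that with probability at least $1-e^{-t}$,
\[
\Delta_2 \le \sqrt{2 G^2 A\, t} + \frac{4GR\, t}{3}.
\]
The obstacle is that $A$ itself is random and unbounded a priori (only $A \le 4R^2 n$ deterministically), so a single application of Bernstein with the crude bound $A\le 4R^2 n$ would only yield $\Delta_2 = O(\sqrt{nt})$, which is useless. This is where the peeling argument enters: I would partition the possible range of $A$ into dyadic intervals $[4R^2/n, 4R^2 n]$, i.e. $A \in (2^{j-1}a_0, 2^j a_0]$ for $j = 1,\dots,m$ with $a_0 = 4R^2/n$ and $m = \lceil 2\log_2 n\rceil$, plus the leftover event $\{A \le 4R^2/n\}$. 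On the $j$-th slice, Bernstein with the variance proxy $2^j a_0 G^2$ gives $\Delta_2 \le \sqrt{2\cdot 2^j a_0 G^2 t} + \frac{4GR t}{3}$ with probability $1-e^{-t}$; then using $A > 2^{j-1}a_0$ on this slice I would convert $\sqrt{2^j a_0 G^2 t} = \sqrt{2 \cdot 2^{j-1}a_0}\sqrt{G^2 t} \le \sqrt{2 A}\sqrt{G^2 t}$ and apply the AM-GM split $\sqrt{2AG^2 t} \le \frac{\theta\beta}{6}A + \frac{3G^2}{2\theta\beta}t$ (choosing the constant so the $A$-term is absorbed into the left side of the final bound). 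Taking a union bound over the $m$ slices and the leftover event costs a factor $m$ in the failure probability, yielding $\Pr(A \le 4R^2/n) + \Pr(\Delta_2 \le [\frac{3G^2}{2\theta\beta} + \frac{4GR}{3}]t + \frac{\theta\beta}{6}A) \ge 1 - me^{-t}$, which after adjusting constants is the claimed inequality. The main delicacy to get right is the bookkeeping in the peeling—ensuring the dyadic cover actually spans $[\min A, \max A]$ so that $m = \lceil 2\log_2 n\rceil$ suffices, and tuning the AM-GM constant so the residual $\frac{\theta\beta}{6}A$ matches exactly the coefficient that will cancel against $\frac{\theta\beta}{3}\sum\|\w_i-\w_*\|_\H^2$ in (\ref{eqn:comb-bound}) (leaving $\frac{\theta\beta}{6}A$ to spare for $\Delta_3$).
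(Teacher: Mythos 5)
Your proposal is correct and follows essentially the same route as the paper: identify $\Delta_2$ as a martingale, bound the increments and the conditional variance by $G^2 A$, apply Bernstein's inequality for martingales on dyadic slices of $A$ over $[4R^2/n,\,4R^2 n]$ (hence $m=\lceil 2\log_2 n\rceil$), and absorb the $\sqrt{A t}$ term via AM--GM into $\frac{\theta\beta}{6}A$ plus an $O(G^2 t/(\theta\beta))$ remainder. The only differences are in bookkeeping constants (e.g.\ your $4GR$ versus the paper's $2RG$ for the increment bound), which you already flag as adjustable and which the stated lemma's constants accommodate.
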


\begin{lemma} We have\label{lem:Delta3}
\[
\Pr\left(A \leq \frac{4R^2}{n}\right) + \Pr\left(\Delta_3 \leq 8R^2 t + A\right) \geq 1 - me^{-t}
\]
where $m = \lceil 2 \log_2 n \rceil$.
\end{lemma}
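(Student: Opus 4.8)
The plan is to recognize $\Delta_3=\sum_{i=1}^n X_i$ as a sum of bounded martingale differences whose predictable variance is controlled by $A$, and then obtain the bound from the martingale Bernstein (Freedman) inequality via a peeling argument over the possible magnitudes of $A$ — exactly in the spirit of the proof of Lemma~\ref{lem:Delta2}.

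Write $X_i=\big[(\w_i-\w_*)^\top\x_i\big]^2-\|\w_i-\w_*\|_\H^2$, and let $\F_{i-1}$ be the $\sigma$-field generated by $(\x_1,y_1),\dots,(\x_{i-1},y_{i-1})$. Since $\w_i$ is $\F_{i-1}$-measurable while $\x_i$ is independent of $\F_{i-1}$ with $\E[\x_i\x_i^\top]=\H$, we get $\E[X_i\mid\F_{i-1}]=0$, so $\{X_i\}$ is a martingale difference sequence. From $\|\w_i-\w_*\|\le 2R$ and $\|\x_i\|\le 1$, both terms defining $X_i$ lie in $[0,4R^2]$, hence $|X_i|\le 4R^2$. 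Writing $Y_i=\big[(\w_i-\w_*)^\top\x_i\big]^2\in[0,4R^2]$, the predictable quadratic variation satisfies $\E[X_i^2\mid\F_{i-1}]\le\E[Y_i^2\mid\F_{i-1}]\le 4R^2\,\E[Y_i\mid\F_{i-1}]=4R^2\|\w_i-\w_*\|_\H^2$, so that $\sum_{i=1}^n\E[X_i^2\mid\F_{i-1}]\le 4R^2 A$.

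The obstacle is that the variance proxy $4R^2 A$ is itself random, so Freedman's inequality cannot be invoked directly; I would resolve this by peeling. Always $A\in[0,4R^2 n]$, so off the exceptional event $\{A\le 4R^2/n\}$ we have $A\in(4R^2/n,\,4R^2 n]$, an interval of ratio $n^2$ covered by $m=\lceil 2\log_2 n\rceil$ dyadic slabs $[a_j,2a_j]$. On $\{A\le 2a_j\}$ the predictable quadratic variation is at most $V_j:=8R^2 a_j$, so the maximal form of the martingale Bernstein inequality with variance proxy $V_j$ and range $4R^2$ gives, with probability at least $1-e^{-t}$, $\Delta_3\le\sqrt{2V_j t}+\tfrac{8}{3}R^2 t$ on that event. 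On the slab one also has $a_j\le A$, so $\sqrt{2V_j t}=4R\sqrt{a_j t}\le 4R\sqrt{At}\le 4R^2 t+A$ by AM--GM ($2\cdot 2R\sqrt t\cdot\sqrt A\le 4R^2t+A$); combining the pieces, $\Delta_3\le \tfrac{20}{3}R^2 t+A\le 8R^2 t+A$ on $\{A\in[a_j,2a_j]\}$.

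A union bound over the $m$ slabs then yields $\Pr\big(A>4R^2/n,\ \Delta_3>8R^2 t+A\big)\le m e^{-t}$, which rearranges to the claimed $\Pr(A\le 4R^2/n)+\Pr(\Delta_3\le 8R^2 t+A)\ge 1-me^{-t}$. The only genuinely delicate point is the peeling step: one must use a version of the martingale Bernstein inequality that controls the deviation uniformly along the sample path, so that it may be combined with the slab event $\{A\le 2a_j\}$ (which is not $\F_{i-1}$-measurable for the last terms); the remaining range/variance estimates above are routine.
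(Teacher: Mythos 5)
Your proposal is correct and follows essentially the same route the paper takes: the paper proves Lemma~\ref{lem:Delta2} in Appendix~D by applying the maximal martingale Bernstein inequality (Theorem~\ref{thm:bernstein}) together with dyadic peeling over $A\in(4R^2/n,4R^2n]$, and states that Lemma~\ref{lem:Delta3} follows by the same argument applied to the martingale differences $X_i=[(\w_i-\w_*)^{\top}\x_i]^2-\|\w_i-\w_*\|_{\H}^2$, which is precisely what you do. Your range bound $|X_i|\le 4R^2$, variance bound $\Sigma_n^2\le 4R^2A$, and the AM--GM step $4R\sqrt{At}\le 4R^2t+A$ reproduce the intended constants, and the concern you raise about the slab event is already handled by the ``$\max_i S_i>\cdot$ and $\Sigma_n^2\le\nu$'' form of Theorem~\ref{thm:bernstein}.
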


First, we consider the case when $A \leq 4R^2/n$ and show the  following bound.
\begin{lemma} \label{lemma:bound-3}
Assume that the condition $A \leq 4R^2/n$ holds. We have
\begin{eqnarray}
\sum_{i=1}^{n} \lb(\w_i) - \lb(\w_*) + \frac{\theta\beta}{2}\sum_{i=1}^{n} \|\w_i - \w_*\|_{\H}^2 \leq 2RG. \label{eqn:bound-15}
\end{eqnarray}
\end{lemma}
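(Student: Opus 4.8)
The plan is to avoid the combined estimate~(\ref{eqn:comb-bound}) altogether in this regime, since it carries the term $\frac{\eta_1 G^2}{2}\Delta_1 = O(d\log n)$ which is far too large, and moreover Lemmas~\ref{lem:Delta2} and~\ref{lem:Delta3} provide no control on $\Delta_2$ and $\Delta_3$ precisely in the case $A \le 4R^2/n$. Instead I would use the hypothesis $A = \sum_{i=1}^n \|\w_i-\w_*\|_\H^2 \le 4R^2/n$ directly, together with the Lipschitz continuity of $\ell$.

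First I would establish the pointwise bound $\lb(\w_i) - \lb(\w_*) \le G\|\w_i-\w_*\|_\H$. Indeed, since $|\ell'|\le G$ and $|y|\le 1$, we have $|\ell(y\w_i^\top\x) - \ell(y\w_*^\top\x)| \le G|\x^\top(\w_i-\w_*)|$ pointwise; taking expectations and using Jensen's inequality, $\lb(\w_i)-\lb(\w_*) \le G\,\E_\x[|\x^\top(\w_i-\w_*)|] \le G\sqrt{\E_\x[(\x^\top(\w_i-\w_*))^2]} = G\|\w_i-\w_*\|_\H$. (One could equally invoke the optimality of $\w_*$ together with Theorem~\ref{thm:exponential-concave}, which give $\lb(\w_i)-\lb(\w_*)\le (\w_i-\w_*)^\top\nabla\lb(\w_i)\le G\|\w_i-\w_*\|_\H$, but the elementary Lipschitz estimate is simplest.)

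Summing over $i=1,\dots,n$ and applying the Cauchy--Schwarz inequality in $\R^n$,
\[
\sum_{i=1}^n\bigl(\lb(\w_i)-\lb(\w_*)\bigr) \le G\sum_{i=1}^n\|\w_i-\w_*\|_\H \le G\sqrt{n\sum_{i=1}^n\|\w_i-\w_*\|_\H^2} = G\sqrt{nA} \le 2RG,
\]
the last step using $A\le 4R^2/n$. The remaining term obeys $\frac{\theta\beta}{2}\sum_{i=1}^n\|\w_i-\w_*\|_\H^2 = \frac{\theta\beta}{2}A \le \frac{2\theta\beta R^2}{n}$, which is of lower order $O(1/n)$: boundedness of $\ell'$ forces $\theta\le G^2$ and Theorem~\ref{thm:exponential-concave} gives $\beta\le\frac{1}{8GR}$, so this term is at most $\frac{GR}{4n}$ and is absorbed into the constant. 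Combining the two contributions yields~(\ref{eqn:bound-15}).

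The only genuine subtlety is the first observation --- recognizing that the regime $A\le 4R^2/n$ should be handled directly rather than through the machinery leading to~(\ref{eqn:comb-bound}); once that is seen, the argument is a one-line computation, so I do not anticipate any substantial obstacle.
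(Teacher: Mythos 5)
Your proposal follows essentially the same route as the paper: in the regime $A \le 4R^2/n$ one bypasses (\ref{eqn:comb-bound}) entirely and bounds the regret directly via Cauchy--Schwarz, $\sum_{i}\left(\lb(\w_i)-\lb(\w_*)\right) \le G\sqrt{nA} \le 2RG$. The one place you diverge is in how the quadratic term $\frac{\theta\beta}{2}\sum_{i}\|\w_i-\w_*\|_{\H}^2$ is handled: you bound the loss differences by Lipschitz continuity alone and then add the quadratic term on top, which yields $2RG + \frac{2\theta\beta R^2}{n}$ rather than the stated $2RG$. That overage is $O(1/n)$ and harmless for Theorem~\ref{thm:1}, but strictly speaking you prove a marginally weaker inequality than the lemma claims. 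The paper avoids even this slack by starting not from Lipschitzness but from the exp-concavity lower bound (Theorem~\ref{thm:exponential-concave} applied to $\lb$ together with Assumption (II), exactly as in the proof of Lemma~\ref{lem:bound-1}):
\[
\lb(\w_i)-\lb(\w_*) + \frac{\theta\beta}{2}\|\w_i-\w_*\|_{\H}^2 \le (\w_i-\w_*)^{\top}\nabla\lb(\w_i),
\]
so the quadratic term is already absorbed into the left-hand side before summation; the gradient sum is then controlled exactly as in your computation, $\sum_{i}(\w_i-\w_*)^{\top}\nabla\lb(\w_i) \le G\sqrt{n\sum_{i}\|\w_i-\w_*\|_{\H}^2} = G\sqrt{nA}\le 2RG$, which gives the constant $2RG$ for the full left-hand side. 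If you replace your Lipschitz estimate by this single inequality, the rest of your argument goes through verbatim and recovers the lemma with the exact stated constant. Your reading of why this case must be treated directly (Lemmas~\ref{lem:Delta2} and~\ref{lem:Delta3} give no information on $\Delta_2$, $\Delta_3$ when $A\le 4R^2/n$) is correct and matches the paper's case split.
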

Second, we assume that the following two conditions hold
\[
\Delta_2 \leq \left[\frac{6G^2}{\theta\beta} + G R\right] t + \frac{\theta\beta}{6}A, \; \Delta_3 \leq 8R^2 t + A
\]
Combining the above conditions with Lemma~\ref{lem:logdet} and using the inequality in (\ref{eqn:comb-bound}), we have
\[
\sum_{i=1}^{n} \lb(\w_i) - \lb(\w_*) \leq \frac{2aR^2}{\eta_1} + \frac{\eta_1 G^2}{2}d\log\left(1 + \frac{n}{ad}\right) + \left[\frac{6G^2}{\theta\beta} + G R + \frac{4\theta\beta}{3}R^2\right] t.
\]
Using the fact that $\eta_1 \geq 3/[\theta\beta]$, we set $a = \eta_1^2 G^2 d/[4R^2]$, we have
\[
\sum_{i=1}^{n} \lb(\w_i) - \lb(\w_*) \leq \frac{3G^2}{\theta\beta} d\log\left(1 + \frac{4 R^2 \theta \beta^2 n}{G^2 d^2}\right) + \left[\frac{6G^2}{\theta\beta} + G R + \frac{4\theta\beta}{3}R^2\right] t.
\]
We complete the proof by combining the two cases.

%
\section{Conclusions and Future Work} \label{sec:conclusion}

In this work, we addressed the generalization ability of learning from exp-concave loss functions in batch and online settings. For both cases we show that the excess risk bound can be bounded by $O(d\log n/n)$ when the learning is performed in a linear hypothesis space with dimension $d$ and with the help of $n$ training examples.

One open question to be addressed in the future is if $\log n$ factor can be removed from the excess risk bound for exponentially concave loss functions by a more careful analysis. Another open question that needs to be investigated in the future is to improve the dependence on $d$ if we are after a sparse solution. According to the literature of sparse recovery~\cite{koltchinskii2011oracle} and optimization~\cite{AgarwalNW12}, we should be able to replace $d$ with $s\log d$ in the excess risk bound if we restrict the optimal solution to a sparse one. In the future, we plan to explore the technique of sparse recovery in analyzing the generalization performance of exponential concave function to reduce the dependence on $d$.

\bibliographystyle{ieeetr}
\bibliography{exponential-concave}

\appendix
\section*{Appendix A. Proof of Lemma~\ref{lem:bound-1}}
From the exp-concavity of expected loss function we have
\[
\lb(\w_*) \geq \lb(\w_i) + (\w_* - \w_i)^{\top}\nabla \lb(\w_i) + \frac{\beta}{2}(\w_i - \w_*)^{\top}\E\left[[\ell'(y\w_i^{\top}\x)]^2\x\x^{\top} \right] (\w_i - \w_*).
\]
Combining the above inequality with our assumption
\[
\E\left[[\ell'(y\w_i^{\top}\x)]^2\x\x^{\top} \right] \succeq \theta \E[\x\x^{\top}],
\]
and rearranging the terms results in the following inequality
\[
\lb(\w_i) - \lb(\w_*) + \frac{\theta \beta}{2}\|\w_i - \w_*\|_\H^2 \leq (\w_i - \w_*)^{\top}\nabla \lb(\w_i).
\]
Applying  the fact that
\begin{eqnarray*}
& & (\w_i - \w_*)^{\top}\nabla \lb(\w_i) \\
&=& \ell'(y\w_i^{\top}\x)(\w_i - \w_*)^{\top}\x + (\w_i - \w_*)\left(\nabla \lb(\w_i) - \ell'(y\w_i^{\top}\x)\x\right) \\
&=& \frac{\|\w_i - \w_*\|_{\Z_i}^2}{2\eta_i} - \frac{\|\w_{i+1} - \w_*\|_{\Z^{-1}_i}^2}{2\eta_i} + \frac{\eta G^2}{2}\x_i^{\top}\Z_i\x_i + (\w_i - \w_*)^{\top}\delta_i
\end{eqnarray*}
we obtain
\begin{eqnarray*}
\lefteqn{\lb(\w_i) - \lb(\w_*) + \frac{\theta \beta}{2}\|\w_i - \w_*\|_\H^2} \\
& \leq & \frac{\|\w_i - \w_*\|_{\Z_i}^2}{2\eta_i} - \frac{\|\w_{i+1} - \w_*\|_{\Z_i}^2}{2\eta_i} + \frac{\eta G^2}{2}\x_i^{\top}\Z_i\x_i + (\w_i - \w_*)^{\top}\delta_i.
\end{eqnarray*}
Using the fact that $\eta_1 \geq 3/[\theta\beta]$,
\[
\frac{\|\w_i - \w_*\|_{\Z_i}^2}{2\eta_i} = \frac{\|\w_i - \w_*\|_{\M_i}^2}{\eta_1}
\]
and
\[
\frac{\|\w_{i+1} - \w_*\|_{\Z_i}^2}{2\eta_i} = \frac{\|\w_{i+1} - \w_*\|_{\M_i}^2}{\eta_1} = \frac{\|\w_{i+1} - \w_*\|_{\M_i}^2}{\eta_1} - \frac{1}{2\eta_1}\left[(\w_i - \w_*)^{\top}\x_i\right]^2,
\]
we get
\begin{eqnarray*}
\lefteqn{\lb(\w_i) - \lb(\w_*) + \frac{\theta \beta}{3}\|\w_i - \w_*\|_\H^2} \\
& \leq & \frac{\|\w_i - \w_*\|_{\M_i}^2}{2\eta_1} - \frac{\|\w_{i+1} - \w_*\|_{\M_{i+1}}^2}{2\eta_1} + \frac{\eta_1 G^2}{2}\x_i^{\top}\M^{-1}_i\x_i + (\w_i - \w_*)^{\top}\delta_i \\
&      & + \frac{\theta\beta}{3}\left( \left[(\w_i - \w_*)^{\top}\x_i\right]^2- \|\w_i - \w_*\|_\H^2\right).
\end{eqnarray*}
%
\section*{Appendix B. Proof of Lemma~\ref{lemma:trace}}
Since $\M_i = \M_{i-1} + \x_i\x_i^{\top}$, we have
\[
\x_i^{\top} \M_i^{-1} \x_i^{\top} = \tr(\M_i^{-1}(\M_i - \M_{i-1}) = \tr(\I - \M_i^{-1} \M_{i-1}).
\]
Let $\gamma_j \geq 0, j=1,\ldots, \ldots,d$ be the eigenvalues of $\M_i^{-1/2}\M_{i-1}\M_i^{-1/2}$. We have
\begin{eqnarray*}
\tr\left(\I - \M_i^{-1}\M_{i-1}\right) & = & \tr\left(\I - \M_i^{-1/2}\M_{i-1}\M_i^{-1/2}\right) \\
& = & \sum_{j=1}^d (1 - \gamma_j) \\
 &\leq& \sum_{j=1}^d \ln\frac{1}{\gamma_j} \\
&= & - \ln\det\left(\M_i^{-1/2}\M_{i-1}\M_i^{-1/2}\right)  =  \ln\det(\M_i) - \ln\det(\M_{i-1}),
\end{eqnarray*}
which concludes the proof.
%
\subsection*{Appendix C. Proof of Lema~\ref{lem:logdet}}
Define $\V = \M_n - \M_0 = \sum_{i=1}^n \x_i\x_i^{\top}$. Let $\lambda_j, j=1, \ldots, d$ be the eigenvalues of $\V$. It is easy to verify that
\[
\log\det(\M_n) - \log\det(\M_0) = \sum_{j=1}^d \log\left(1 + \frac{\lambda_j}{a}\right)
\]
Since $\sum_{j=1}^d \lambda_j = n$, we have
\[
\log\det(\M_n) - \log\det(\M_0) \leq \max\limits_{\lambda_j \geq 0, \sum_{j=1}^d \lambda_j \leq n} \sum_{j=1}^d \log\left(1 + \frac{\lambda_j}{a}\right).
\]
It is easy to verify that the above optimization takes its optimal at $\lambda_j = n/d, j=1, \ldots, d$.

%
\section*{Appendix D. Proof of Lemma~\ref{lem:Delta2}}
The proof is based on the Bernstein inequality for martingales~(see e.g.,~\cite{boucheron2004concentration}).
\begin{thm} \label{thm:bernstein} (Bernstein's inequality for martingales). Let $X_1, \ldots , X_n$ be a bounded martingale difference sequence with respect to the filtration $\F = (\F_i)_{1\leq i\leq n}$ and with $\|X_i\| \leq K$. Let
\[
S_i = \sum_{j=1}^i X_j
\]
be the associated martingale. Denote the sum of the conditional variances by
\[
    \Sigma_n^2 = \sum_{t=1}^n \E\left[X_t^2|\F_{t-1}\right]
\]
Then for all constants $t$, $\nu > 0$,
\[
\Pr\left[ \max\limits_{i=1, \ldots, n} S_i > t \mbox{ and } \Sigma_n^2 \leq \nu \right] \leq \exp\left(-\frac{t^2}{2(\nu + Kt/3)} \right)
\]
and therefore,
\[
    \Pr\left[ \max\limits_{i=1,\ldots, n} S_i > \sqrt{2\nu t} + \frac{\sqrt{2}}{3}Kt \mbox{ and } \Sigma_n^2 \leq \nu \right] \leq e^{-t}
\]
\end{thm}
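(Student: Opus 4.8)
The plan is to prove the first displayed inequality by the standard exponential-supermartingale (Chernoff) method, and then to obtain the second ``and therefore'' inequality as a routine corollary by optimizing over the free parameter. Throughout write $\sigma_i^2 = \E[X_i^2 \mid \F_{i-1}]$, so that $\Sigma_n^2 = \sum_{i=1}^n \sigma_i^2$, and set $g(\lambda) = (e^{\lambda K} - 1 - \lambda K)/K^2$.

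First I would establish a one-step conditional moment generating function bound. For a single difference $X_i$ with $\E[X_i \mid \F_{i-1}] = 0$ and $\|X_i\| \le K$, I would expand the exponential in its power series and use $\E[X_i^k \mid \F_{i-1}] \le K^{k-2}\sigma_i^2$ for every $k \ge 2$ (valid for $\lambda > 0$, which is the only regime needed for the upper tail) to obtain
\[
\E\!\left[e^{\lambda X_i} \mid \F_{i-1}\right] \le 1 + \frac{\sigma_i^2}{K^2}\sum_{k \ge 2}\frac{(\lambda K)^k}{k!} = 1 + \frac{\sigma_i^2}{K^2}\bigl(e^{\lambda K} - 1 - \lambda K\bigr) \le \exp\!\bigl(g(\lambda)\sigma_i^2\bigr),
\]
where the last step uses $1 + x \le e^x$. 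Next I would define the cumulative variance $V_i = \sum_{j \le i}\sigma_j^2$ and the process $Z_i = \exp\!\bigl(\lambda S_i - g(\lambda)V_i\bigr)$ with $Z_0 = 1$. The one-step bound gives $\E[Z_i \mid \F_{i-1}] = Z_{i-1}\,e^{-g(\lambda)\sigma_i^2}\,\E[e^{\lambda X_i}\mid \F_{i-1}] \le Z_{i-1}$, so $(Z_i)$ is a nonnegative supermartingale. By Ville's (Doob's) maximal inequality for nonnegative supermartingales, $\Pr[\max_i Z_i \ge a] \le \E[Z_0]/a = 1/a$.

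The crux is then the event-containment step. On the event $\{\max_i S_i > t \text{ and } \Sigma_n^2 \le \nu\}$, choose the (random) index $i$ at which $S_i > t$; since $V$ is nondecreasing we have $V_i \le V_n = \Sigma_n^2 \le \nu$, and hence $Z_i \ge \exp(\lambda t - g(\lambda)\nu)$. Consequently
\[
\Pr\!\left[\max_i S_i > t,\ \Sigma_n^2 \le \nu\right] \le \Pr\!\left[\max_i Z_i \ge e^{\lambda t - g(\lambda)\nu}\right] \le \exp\!\bigl(-\lambda t + g(\lambda)\nu\bigr).
\]
To finish the first inequality I would apply the elementary bound $g(\lambda) \le \tfrac{\lambda^2/2}{1 - \lambda K/3}$ for $0 < \lambda < 3/K$ (i.e.\ $e^u - 1 - u \le \tfrac{u^2/2}{1-u/3}$) and take $\lambda = t/(\nu + Kt/3) \in (0, 3/K)$, which makes the exponent exactly $-t^2/\bigl(2(\nu + Kt/3)\bigr)$. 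The second inequality follows by inverting this tail: substituting $t \mapsto \sqrt{2\nu t} + \tfrac{\sqrt 2}{3}Kt$ and checking by routine algebra that the resulting exponent is at most $-t$.

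The main obstacle is precisely the event-containment step, because the event couples the \emph{running} maximum of $S_i$ with the \emph{terminal} variance proxy $\Sigma_n^2 = V_n$; a naive Chernoff bound on a fixed $S_n$ would not control $\max_i S_i$, and a pointwise bound at a random index would not interact cleanly with $V_n$. The observation that rescues the argument is that $V_i$ is nondecreasing and $\F_{i-1}$-predictable, so on $\{\Sigma_n^2 \le \nu\}$ the discount term $g(\lambda)V_i$ in $Z_i$ can be replaced by the deterministic upper bound $g(\lambda)\nu$ at whatever index $S_i$ first exceeds $t$. Once this is in place, everything else is standard Chernoff bookkeeping and convex-inequality estimates.
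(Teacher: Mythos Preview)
The paper does not prove this theorem at all: it is quoted as a known result from the concentration-inequalities literature (the reference to Boucheron et al.) and then used as a black box inside the proof of Lemma~\ref{lem:Delta2}. So there is nothing to compare your argument against; you have supplied a proof where the paper offers none. Your derivation of the first displayed inequality is the standard Freedman--Bernstein argument via the exponential supermartingale $Z_i=\exp(\lambda S_i-g(\lambda)V_i)$, Ville's maximal inequality, and the choice $\lambda=t/(\nu+Kt/3)$, and it is correct.

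One genuine caveat on the ``and therefore'' clause: the routine algebra you invoke does not actually close with the constant $\sqrt{2}/3$ as printed. Writing $s=\sqrt{2\nu t}+\tfrac{\sqrt{2}}{3}Kt$ and asking whether $s^2\ge 2t(\nu+Ks/3)$, the $2\nu t$ terms cancel and the cross terms are fine, but the pure $K^2t^2$ comparison becomes $\tfrac{2}{9}\ge\tfrac{2\sqrt{2}}{9}$, which is false (take $\nu=0$ to see the inequality break). This looks like a typo in the statement the paper copied rather than a flaw in your method: with the more common constant $2/3$ in place of $\sqrt{2}/3$ the quadratic $s^2-\tfrac{2Kt}{3}s-2\nu t\ge 0$ is satisfied via $s\ge \tfrac{Kt}{3}+\sqrt{K^2t^2/9+2\nu t}$ and $\sqrt{a+b}\le\sqrt{a}+\sqrt{b}$. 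You should either flag the constant or prove the corollary with the corrected one, rather than asserting it is routine.
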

Define martingale difference
\[
X_i = \langle \w_i - \w_*, \nabla \lb(\w_i) - \ell'\left(y_i \x^{\top}_i \w_i \right)\x_i \rangle
\]
and martingale $\Lambda = \sum_{i=1}^{n} X_i$. Define the conditional variance $\Sigma_n^2$ as
\[
    \Sigma_n^2 = \sum_{i=1}^{n} \E_{i}\left[X_i^2 \right] \leq G^2 \sum_{i=1}^n \|\w_i - \w_*\|_{\H}^2 = G^2 A
\]
Define
\[
K = \max\limits_{i} |X_i| \leq 2RG .
\]
Since $A \leq 4R^2 n$, we have
\begin{eqnarray*}
\lefteqn{\Pr\left(\Lambda \geq 2G\sqrt{A t} + \sqrt{2}Kt/3\right)} \\
& = & \Pr\left(\Lambda \geq 2G\sqrt{A t} + \sqrt{2}Kt/3, A \leq 4R^2 n\right) \\
& =  & \Pr\left(\Lambda \geq 2G\sqrt{A t} + \sqrt{2}Kt/3, \Sigma_n^2 \leq G^2 A, A \leq 4R^2 n \right) \\
& \leq  & \Pr\left(\Lambda \geq 2G\sqrt{A t} + \sqrt{2}Kt/3, \Sigma_n^2 \leq G^2 A, A \leq \frac{4R^2}{n} \right) \\
&  & + \sum_{i=1}^m \Pr\left(\Lambda \geq 2G\sqrt{At} + \sqrt{2}Kt/3, \Sigma_n^2 \leq G^2 A, \frac{2^{i+1}R^2}{n} < A  \leq \frac{2^{i+2}R^2}{n}  \right) \\
& \leq  & \Pr\left(A \leq \frac{4R^2}{n}\right) + \sum_{i=1}^m \Pr\left(\Lambda \geq 4GR\sqrt{\frac{2^{i}}{n} t} + \sqrt{2}Kt/3, \Sigma_n^2 \leq \frac{4R^2G^2}{n} 2^i\right) \\
& \leq  & \Pr\left(A \leq \frac{1}{n}\right) + me^{-t}
\end{eqnarray*}
where $m = \lceil 2\log_2 n \rceil$. The last step follows the Bernstein inequality for martingales. We complete the proof by setting $t= \ln(m/\delta)$ and using the fact
\[
    2G\sqrt{A} = \frac{6G^2}{\theta\beta} + \frac{\theta\beta}{6}A.
\]

%
\section*{Appendix E. Proof of Lemma~\ref{lemma:bound-3}}
\begin{eqnarray*}
\sum_{i=1}^{n} \lb(\w_i) - \lb(\w_*) \leq \sum_{i=1}^{n} \E_{(\x, y)}\left[\langle \w_i - \w_*, \ell'(y, \x^{\top}\w_i) \x \rangle \right] - \frac{\theta}{2}\sum_{i=1}^{n}\langle \w_i - \w_*, \H(\w_i - \w_*) \rangle
\end{eqnarray*}
Since
\begin{eqnarray*}
\lefteqn{\sum_{i=1}^{n} \E_{(\x, y)}\left[\langle \w_i - \w_*, \ell'(y, \x^{\top}\w_i) \x \rangle \right]} \\
& \leq & \sqrt{n}\sqrt{\sum_{i=1}^{n} \left(\E_{(\x, y)}\left[\langle \w_i - \w_*, \ell'(y, \x^{\top}\w_i) \x \rangle \right] \right)^2} \\
& \leq & G \sqrt{n} \sqrt{\sum_{i=1}^{n} \langle \w_i - \w_*, \H (\w_i - \w_*)\rangle} \\
& = & G\sqrt{n A} \leq 2RG,
\end{eqnarray*}
we obtain the desired inequality as
\[
\sum_{i=1}^{n} \lb(\w_i) - \lb(\w_*) + \frac{\theta}{2}\sum_{i=1}^{n}\langle \w_i - \w_*, \H(\w_i - \w_*) \rangle \leq 2RG.
\]
%
%

\end{document}